\documentclass[10pt]{article}
\usepackage[numbers]{natbib}
\usepackage{soul}
\usepackage{url}
\usepackage[utf8]{inputenc}
\usepackage[small]{caption}
\usepackage{graphicx}
\usepackage{hyperref}       % hyperlinks
\usepackage{booktabs}       % professional-quality tables
\usepackage{nicefrac}       % compact symbols for 1/2, etc.
\usepackage{microtype}      % microtypography
\usepackage{color}
\usepackage{times}
\usepackage{graphicx} % more modern
\usepackage{subcaption}
\usepackage{algorithm}
\usepackage{algorithmic}
\usepackage{paralist}
\usepackage{multirow}
\usepackage{diagbox}
\usepackage{amsmath,amssymb,amsthm,mathrsfs,amsfonts,dsfont}
\urlstyle{same}
\usepackage{thmtools,thm-restate}
\usepackage{geometry}
\geometry{margin=1in}

% add symbols here as needed

\newcommand\ind[1]{\ensuremath{\mathds{1}\left[#1\right]}}

\def\argmin{\ensuremath{\mbox{argmin}}}

\newcommand{\D}{\mathcal{D}}

\newcommand{\B}{\mathcal{B}}
\newcommand{\ES}{\mathcal{S}}
\newcommand{\M}{\mathcal{M}}
\newcommand{\E}[2]{\mathbf{E}_{#1}\left[#2\right]}

\renewcommand{\P}[2]{\mathbf{P}_{#1}\left(#2\right)}
\newcommand{\R}{\mathbb{R}}

\newcommand{\X}{\mathcal{X}}

\newcommand{\Y}{\mathcal{Y}}
\newcommand{\Z}{\mathcal{Z}}

\newcommand{\thmref}[1]{Theorem~\ref{#1}}
\newcommand{\lemref}[1]{Lemma~\ref{#1}}

\newtheorem{definition}{Definition}
\newtheorem{example}{Example}

\title{Data Poisoning against Differentially-Private Learners: \\ Attacks and Defenses}

\author{Yuzhe Ma \\ ma234@wisc.edu   \and Xiaojin Zhu\\ jerryzhu@cs.wisc.edu \and Justin Hsu \\justhsu@cs.wisc.edu \and Department of Computer Sciences, University of Wisconsin--Madison}

\date{}

\begin{document}
	
	\maketitle
	
	\begin{abstract}
\emph{Data poisoning} attacks aim to manipulate the model produced by a learning algorithm by adversarially modifying the training set. 
We consider \emph{differential privacy} as a defensive measure against this type of attack.
We show that such learners are resistant to data poisoning attacks when the adversary is only able to poison a small number of items.
However, this protection degrades as the adversary poisons more data.
To illustrate, we design attack algorithms targeting \emph{objective} and \emph{output perturbation} learners, two standard approaches to differentially-private machine learning.
Experiments show that our methods are effective when the attacker is allowed to poison sufficiently many training items.
\end{abstract}
	
	\section{Introduction}

As machine learning is increasingly used for consequential decisions in the real
world, the security concerns have received more and more scrutiny. Most machine
learning systems were originally designed without much thought to security, but
researchers have since identified several kinds of attacks under the umbrella of
\emph{adversarial machine learning}. The most well-known example is
\emph{adversarial examples}~\cite{szegedy2013intriguing}, where inputs are
crafted to fool classifiers. More subtle methods aim to recover training
examples \cite{fredrikson2015model}, or even extract the model itself
\cite{tramer2016stealing}.

In \emph{data poisoning}
attacks~\cite{biggio2012poisoning,koh2018stronger,Mei2015Machine,alfeld2016data,li2016data,xiao2015feature,munoz2017towards,jun2018adversarial,zhao2017efficient,zhang2019online},
the adversary corrupts examples at training time to manipulate the learned
model.
%This kind of manipulation may benefit malicious
%adversaries in many ways. For instance, an adversary may be interested in
%degrading a classifier's prediction accuracy to make a classifier less
%useful.  Adversaries may also pursue more targeted attacks, perhaps aiming to
%control a classifier's prediction on specific key inputs, as a kind of
%classification back door.
As is generally the case in adversarial machine learning, the rise of data
poisoning attacks has outpaced the development of robust defenses. While
attacks are often evaluated against specific target learning procedures,
effective defenses should provide protection---ideally guaranteed---against a
broad class of attacks. In this paper, we study \emph{differential
privacy}~\cite{dwork2006calibrating,dwork2011differential} as a general
defensive technique against data poisoning. 
%In a nutshell, a differentially private learner is a randomized
%algorithm taking a set of training examples as input and producing a model, and
%the privacy guarantee requires that arbitrarily modifying a single training
%example can only affect the output distribution by a small amount. 
While differential privacy was originally designed to formalize privacy---the
output should not depend too much on any single individual's private data---it
also provides a defense against data poisoning. Concretely, we establish
quantitative bounds on how much an adversary can change the distribution over
learned models by manipulating a fixed number of training items. We are not the
first to design defenses to data poisoning~\cite{steinhardt2017certified,raghunathan2018certified},
but our proposal has several notable strengths compared to previous work.
First, our defense provides provable protection against a worst-case adversary.
Second, our defense is general. Differentially-private learning procedures are
known for a wide variety of tasks, and new algorithms are continually being
proposed. These learners are all resilient against data poisoning.

We complement our theoretical bounds with an empirical evaluation of data
poisoning attacks on differentially private learners. Concretely, we design an
attack based on stochastic gradient descent to search for effective training
examples to corrupt against specific learning algorithms. We evaluate attack on
two private learners: objective perturbation~\cite{kifer2012private} and output
perturbation~\cite{chaudhuri2011differentially}. Our evaluation confirms the
theoretical prediction that private learners are vulnerable to data poisoning
attacks when the adversary can poison sufficiently many examples.  A gap remains
between performance of attack and theoretical limit of data poisoning
attacks---it seems like differential privacy provides a stronger defense than
predicted by theory. We discuss possible causes and leave further investigation
to future work.

	\section{Preliminaries}\label{threat-model}

\paragraph*{Differential Privacy.}
Let the data space be $\Z =\X\times\Y$, where $\X$ is the feature space and $\Y$
is the label space. We write $\D=\cup_{n=0}^\infty \Z^n$ for the space of all
training sets, and write $D\in\D$ for a particular training set. A randomized
learner $\M :\D \times \R^d \mapsto \Theta$ maps a training set $D$ and noise
$b\in\R^d$ drawn from a distribution $\nu$ to a model in the model space
$\Theta$.
\begin{definition}{(Differential Privacy)}
  $\M$ is $(\epsilon,\delta)$-\emph{differentially-private} if $\forall  D, \tilde D\in\D$ that differ by one item, and $\forall \ES\subset \Theta$,
\begin{equation}
\P{}{\M(D,b)\in \ES}\le e^\epsilon\P{}{\M(\tilde D,b)\in \ES}+\delta,
\end{equation}
where the probability is taken over $b \sim \nu$.  When $\delta=0$, we say that
$\M$ is $\epsilon$-\emph{differentially private}.
\end{definition}
Many standard machine learning tasks can be phrased as optimization problems
modeling empirical risk minimization (ERM). Broadly speaking, there are two
families of techniques in the privacy literature for solving these problems.
Objective perturbation injects noise into the objective function of a vanilla
machine learner to train a randomized
model~\cite{chaudhuri2011differentially,kifer2012private,nozari2016differentially}.
In contrast, output perturbation runs the vanilla learner but after training, it
randomly perturbs the output model~\cite{chaudhuri2011differentially}. We will
consider data poisoning attacks against both kinds of learners.

\paragraph*{Threat Model.}
We first fix the adversarial attack setting.

\emph{Knowledge of the attacker:} 
The attacker has full knowledge of the full training set $D$ and the
differentially-private machine learner $\M$, including the noise distribution
$\nu$. However, the attacker does not know the realization of $b$.

\emph{Ability of the attacker:}
The attacker is able to modify the clean training set $D$ by changing the
features or labels, adding additional items, or deleting existing items.
Nevertheless, we limit the attacker to modify at most $k$ items. Formally, the
poisoned training set $\tilde D$ must lie in the ball $\B(D,k)\subset \D$, where
the center is the clean data $D$ and the radius $k$ is the maximum number of item
changes.

\emph{Goal of the attacker:}
the attacker aims to force the (stochastic) model learned from the poisoned data
$\tilde\theta=\M(\tilde D, b)$ to achieve certain nefarious target. Formally, we define a cost function
$C:\Theta\mapsto \R$ that measures how much the poisoned model $\tilde\theta$
deviates from the attack target.
% For technical reasons we assume that $C$ is
% either non-negative or non-positive; this can usually be ensured by a shift.
%\footnote{This is necessary for us to apply the integral identity in proving our theorems.}
Then the attack problem can be formulated as minimizing an objective function $J$:
\begin{equation}\label{attack:opt}
\min_{ \tilde D \in \B(D,k)} \quad J(\tilde  D):=\E{b}{ C(\M( \tilde D, b))}.
\end{equation}
This formulation differs from previous works
(e.g.~\cite{biggio2012poisoning,Mei2015Machine,xiao2015feature}) because the
differentially-private learner produces a stochastic model, so the objective
takes the expected value of $C$. 
 We now show a few attack problem instances formulated as~\eqref{attack:opt}.

\begin{example}{(Parameter-Targeting Attack)}
\label{ex:parameter}
If the attacker wants the output model to be close to a target model
$\theta^\prime$, the attacker can define $C(\tilde \theta)=\frac{1}{2}\|\tilde
\theta-\theta^\prime\|^2$, a non-negative cost. Minimizing $J( \tilde D)$
pushes the poisoned model close to $\theta^\prime$ in expectation.
\end{example}

\begin{example}{(Label-Targeting Attack)}
\label{ex:targeting}
If the attacker wants small prediction error on an evaluation set
$\{z_i^*\}_{i\in[m]}$, the attacker can define $C(\tilde
\theta)=\frac{1}{m}\sum_{i=1}^m{\ell(\tilde \theta, z_i^*)}$ where $\ell(\tilde
\theta, z_i^*)$ is the loss on item $z_i^*=(x_i^*, y_i^*)$. Minimizing
$J( \tilde D)$ pushes the prediction on $x_i^*$ towards the target label $y_i^*$.
\end{example}

\begin{example}{(Label-Aversion Attack)}
\label{ex:aversion}
If the attacker wants to induce large prediction error on an evaluation set
$\{z_i^*\}_{i\in[m]}$, the attacker can define $C(\tilde
\theta)=-\frac{1}{m}\sum_{i=1}^m\ell(\tilde \theta, z_i^*)$, a non-positive
cost.  Minimizing $J( \tilde D)$ pushes the prediction on $x_i^*$ away from the
true label $y_i^*$.
\end{example}

Now, we turn to our two main contributions. $i)$ We show that
differentially-private learners have \emph{some} natural immunity against data
poisoning attacks.  Specifically, in section~\ref{sec:immunity} we present a
lower bound on how much the attacker can reduce $J( \tilde D)$ under a fixed $k$
(the number of changes). $ii)$ When $k$ increases, the attacker's power grows
and it becomes possible to attack differentially-private learners. In
section~\ref{sec:attack} we propose a stochastic gradient descent algorithm to
solve the attack optimization problem~\eqref{attack:opt}. To our knowledge, ours
is the first data poisoning attack targeting differentially-private learners.

	\section{Differential Privacy Resists Data Poisoning} 
\label{sec:immunity}

Fixing the number of poisoned items $k$ and a differentially private learner,
can a powerful attacker achieve arbitrarily low attack cost $J( \tilde D)$?  We
show that the answer is no: differential privacy implies a lower bound on how
far $J( \tilde D)$ can be reduced, so private learners are inherently resistant
to data poisoning attacks.  Previous work \cite{lecuyer2018certified} proposed
defenses based on differential privacy, but for test-time attacks. While
effective, these defenses require the classifier to make randomized predictions.
In contrast, our defense against data poisoning only requires the \emph{learning
algorithm}, not the classification process, to be randomized. 
%a common feature in modern techniques (e.g., stochastic gradient descent).

Our first result is for $\epsilon$-differentially private learners:
if an attacker can manipulate at most $k$ items in the clean data, then the
attack cost is lower bounded. 
%All proofs are given in the anonymized full paper at~\url{https://anonymous.4open.science/repository/5ec6ab48-2ce8-433c-8fce-6d086517d7ca/}.

\begin{restatable}{theorem}{thmDPdefenceEps}
\label{thm:DP-defence-eps}
Let $\M$ be an $\epsilon$-differentially-private learner. Let $J( \tilde D)$ be the attack cost, where $\tilde D\in \B(D,k)$, then 
\begin{equation}\label{eq:lowerbound-eps}
\begin{aligned}
&J( \tilde D)\ge e^{-k\epsilon}J(D) && (C\ge 0),\\
&J( \tilde D)\ge e^{k\epsilon}J(D) &&(C\le 0) .
\end{aligned}
\end{equation}
\end{restatable}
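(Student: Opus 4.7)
The plan is to reduce the theorem to a two-step chain: first lift the one-item differential privacy guarantee to a $k$-item (group privacy) guarantee, then lift the resulting probability inequality to an inequality on expectations of the cost.

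For the first step, I would construct a sequence $D = D_0, D_1, \dots, D_k = \tilde D$ in which each consecutive pair differs by exactly one item (insertion, deletion, or substitution), which is possible because $\tilde D \in \B(D, k)$. Applying the one-item $\epsilon$-DP definition $k$ times along this chain yields, for every measurable $\ES \subseteq \Theta$,
\begin{equation*}
  \P{}{\M(D,b) \in \ES} \le e^{k\epsilon}\,\P{}{\M(\tilde D, b) \in \ES}
  \quad\text{and}\quad
  \P{}{\M(\tilde D, b) \in \ES} \le e^{k\epsilon}\,\P{}{\M(D, b) \in \ES},
\end{equation*}
the classical group-privacy statement (with $\delta = 0$ there is no additive term to worry about).

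For the second step, I would convert these pointwise probability bounds into bounds on $\E{b}{C(\M(\cdot, b))}$ via the layer-cake representation. When $C \ge 0$, write $J(D) = \int_0^\infty \P{}{C(\M(D,b)) > t}\, dt$; apply group privacy to the level set $\ES_t = \{\theta : C(\theta) > t\}$ inside the integral to obtain $J(D) \le e^{k\epsilon} J(\tilde D)$, i.e.\ $J(\tilde D) \ge e^{-k\epsilon} J(D)$. When $C \le 0$, apply the same argument to the non-negative function $-C$: group privacy in the \emph{other} direction gives $-J(\tilde D) \le e^{k\epsilon}(-J(D))$, which rearranges to $J(\tilde D) \ge e^{k\epsilon} J(D)$ (a meaningful lower bound because $J(D) \le 0$).

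The only subtle step is the expectation-to-probability conversion: a naive attempt to integrate $e^{k\epsilon}$ against $\P{}{\M(D,b) = \theta}$ and $\P{}{\M(\tilde D,b) = \theta}$ pointwise is not well-defined when the output distributions are continuous, so the layer-cake formula (or equivalently integrating group privacy against the measure $dC$ on $\Theta$) is the right device. Choosing the correct direction of group privacy in each sign case, so that the factor $e^{-k\epsilon}$ ends up on the $J(D)$ side as the statement demands, is the one place where care is needed; everything else is routine.
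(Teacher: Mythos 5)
Your proof is correct and essentially matches the paper's: both decompose $\tilde D$ into a chain of $k$ single-item changes and use the layer-cake identity on the level sets of $C$ (respectively $-C$ for the non-positive case) to pass from the differential-privacy probability bound to the bound on expectations. The only difference is the order of the two lifting steps --- you compose the $k$ DP inequalities first into a group-privacy statement and then integrate once, whereas the paper integrates at each single-item step (its Lemma~\ref{lem:DP-defence-eps}) and then composes the resulting inequalities on $J$; with $\delta=0$ these orderings are interchangeable and yield the same bound.
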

\begin{proof}
For any poisoned data $\tilde D\in \B(D,k)$, one can view it as being produced by manipulating the clean data $D$ $k$ times, where each time one item is changed.  
Thus one can apply~\lemref{lem:DP-defence-eps} $k$ times to obtain~\eqref{eq:lowerbound-eps}.
\end{proof}

\begin{restatable}{lemma}{lemLBeps}
\label{lem:DP-defence-eps}
Let $\M$ be an $\epsilon$-differentially-private learner. Let $J( \tilde D)$ be the attack cost, where $\tilde D\in \B(D,1)$, then 
\begin{equation}
\begin{aligned}
&J( \tilde D)\ge e^{-\epsilon}J( D), && (C\ge 0),\\
&J( \tilde D)\ge e^{\epsilon}J( D), && (C\le 0).
\end{aligned}
\end{equation}
\end{restatable}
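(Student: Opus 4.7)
\textbf{Proof plan for Lemma~\ref{lem:DP-defence-eps}.} The plan is to reduce the statement about expected costs to the pointwise differential privacy guarantee via the layer-cake representation of an expectation in terms of its superlevel sets. Concretely, for the case $C\ge 0$, I would first write
\[
J(D)=\E{b}{C(\M(D,b))}=\int_{0}^{\infty}\P{b}{C(\M(D,b))>t}\,dt,
\]
and likewise for $\tilde D$. For each fixed $t\ge 0$ the superlevel set $\ES_t\defeq\{\theta\in\Theta:C(\theta)>t\}$ is a measurable subset of $\Theta$, so $\epsilon$-differential privacy, applied to the pair $(D,\tilde D)$ which differ in one item, gives
\[
\P{b}{\M(D,b)\in\ES_t}\le e^{\epsilon}\P{b}{\M(\tilde D,b)\in\ES_t}.
\]
Integrating this inequality in $t$ over $[0,\infty)$ and interchanging the integrals (all integrands are non-negative, so Tonelli's theorem applies with no delicacy) yields $J(D)\le e^{\epsilon}J(\tilde D)$, which rearranges to the claimed bound $J(\tilde D)\ge e^{-\epsilon}J(D)$.

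For the second case $C\le 0$, I would apply the same argument to $C'\defeq -C\ge 0$, but invoke differential privacy with the roles of $D$ and $\tilde D$ swapped. Since $\tilde D$ and $D$ also differ in only one item, the definition yields
\[
\P{b}{\M(\tilde D,b)\in\ES_t'}\le e^{\epsilon}\P{b}{\M(D,b)\in\ES_t'}
\]
for $\ES_t'=\{\theta:C'(\theta)>t\}=\{\theta:C(\theta)<-t\}$. Integrating in $t$ gives $-J(\tilde D)\le e^{\epsilon}(-J(D))$, i.e.\ $J(\tilde D)\ge e^{\epsilon}J(D)$, which is the desired inequality since $J(D)\le 0$.

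The two cases thus reduce to a single calculation, and the only nontrivial modeling choice is the direction in which DP is applied. I expect the main obstacle to be purely expository: making clear why the direction of DP must be reversed in the non-positive case so that the prefactor $e^{\epsilon}$ (rather than $e^{-\epsilon}$) is the correct constant, and why this reversal is legitimate (the symmetry of the neighboring relation ``differ by one item''). No measure-theoretic subtleties arise provided $C$ is integrable under the law of $\M(D,b)$ and $\M(\tilde D,b)$; the layer-cake identity plus Tonelli dispatches the rest.
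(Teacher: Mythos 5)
Your proposal is correct and follows essentially the same route as the paper: a layer-cake (integral identity) representation over superlevel/sublevel sets, the DP inequality applied pointwise in the level parameter, and a reversal of the roles of $D$ and $\tilde D$ in the non-positive case. Your substitution $C'=-C$ is just a change of variables relative to the paper's direct integration over $a\le 0$, so the two arguments coincide.
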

\begin{proof}
We first consider $C\ge 0$. Define $\Theta(a)=\{\theta:  C(\theta)> a\}$, $\forall a\ge 0$. Since $\tilde D\in \B(D, 1)$, $\tilde D$ and $D$ differ by at most one item, thus differential privacy guarantees that $\forall a\ge0$,
\begin{equation}
\P{}{\M(D,b)\in \Theta(a)}\le e^{\epsilon}\P{}{\M(\tilde D, b)\in \Theta(a)}.
\end{equation}
Since $J(D)$ is non-negative, by the integral identity, we have
\begin{equation}
\begin{aligned}
J( D)&=\E{b}{C(\M(D,b))}= \int_{0}^\infty\P{}{C(\M(D,b))> a}da=\int_{0}^\infty\P{}{\M(D,b)\in \Theta(a)}da\\
&\le  \int_{0}^\infty e^{\epsilon}\P{}{\M(\tilde D, b)\in \Theta(a)}da=e^{\epsilon}\int_0^\infty \P{}{C(\M(\tilde D,b))> a}da=e^{\epsilon}J( \tilde D).
\end{aligned}
\end{equation}
Thus $J( \tilde D)\ge e^{-\epsilon}J( D)$.
Next we consider $C\le 0$. Define $\Theta(a)=\{\theta:  C(\theta)< a\}$, $\forall a\le 0$. Again due to $\M$ being differentially private,
\begin{equation}
\P{}{\M(\tilde D,b)\in \Theta(a)}\le e^{\epsilon}\P{}{\M(D, b)\in \Theta(a)}.
\end{equation}
Since $J(\tilde D)$ is non-positive, by the integral identity, we have
\begin{equation}
\begin{aligned}
J( \tilde D)&=\E{b}{C(\M(\tilde D,b))}= -\int_{-\infty}^{0}\P{}{C(\M(\tilde D,b))< a}da=-\int_{-\infty}^0\P{}{\M(\tilde D,b)\in \Theta(a)}da\\
&\ge  -\int_{-\infty}^0 e^{\epsilon}\P{}{\M(D, b)\in \Theta(a)}da=-\int_{-\infty}^0e^{\epsilon} \P{}{C(\M(D,b))< a}da=e^{\epsilon}J(D).
\end{aligned}
\end{equation}
\end{proof}

Note that for $C\ge 0$,
\thmref{thm:DP-defence-eps} shows that no attacker can achieve the trivial lower
bound $J(\tilde D)=0$. For $C\le 0$, although $J(\tilde D)$ could be unbounded
from below,~\thmref{thm:DP-defence-eps} shows that no attacker can reduce
$J(\tilde D)$ arbitrarily. These results crucially rely on the differential
privacy property of $\M$.  Corollary~\ref{col:num-of-items-eps} re-states the theorem in
terms of the minimum number of items an attacker has to modify in order to sufficiently
reduce the attack cost $J( \tilde D)$.

\begin{restatable}{corollary}{corNumEps}
\label{col:num-of-items-eps}
Let $\M$ be an $\epsilon$-differentially-private learner. Assume $J(D)\neq 0$.
Let $\tau\ge 1$. Then the attacker has to modify at least $k \ge
\lceil\frac{1}{\epsilon}\log \tau\rceil$ items in $D$ in order to achieve
\begin{enumerate}[i).]
\centering
\item $J( \tilde D)\le {1 \over \tau} J(D) \quad (C\ge 0)$.
\item $J( \tilde D)\le \tau J(D) \quad (C\le 0)$.
\end{enumerate}
\end{restatable}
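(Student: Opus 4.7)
The plan is to invert the lower bounds from Theorem~\ref{thm:DP-defence-eps}: each bound expresses $J(\tilde D)$ in terms of $J(D)$ and the number of poisoned items $k$, and we only need to ask how large $k$ must be before the bound permits the attacker's desired reduction. Since $k$ ranges over non-negative integers, the ceiling in the statement will appear automatically when we solve for $k$.

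For case $i$), where $C\ge 0$, Theorem~\ref{thm:DP-defence-eps} gives $J(\tilde D)\ge e^{-k\epsilon}J(D)$. Combining this with the attacker's target $J(\tilde D)\le \tau^{-1}J(D)$ forces $e^{-k\epsilon}J(D)\le \tau^{-1}J(D)$; because $J(D)>0$ (we assume $J(D)\ne 0$ and $J(D)\ge 0$ under $C\ge 0$), we may divide through to obtain $e^{-k\epsilon}\le \tau^{-1}$, i.e.\ $k\epsilon \ge \log\tau$. Since $k$ must be an integer, this is equivalent to $k\ge \lceil \frac{1}{\epsilon}\log\tau\rceil$.

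For case $ii$), where $C\le 0$, Theorem~\ref{thm:DP-defence-eps} gives $J(\tilde D)\ge e^{k\epsilon}J(D)$, and the attacker wants $J(\tilde D)\le \tau J(D)$. Here $J(D)< 0$ (again using $J(D)\ne 0$ and $C\le 0$), so dividing the combined inequality $e^{k\epsilon}J(D)\le \tau J(D)$ by $J(D)$ flips the direction and yields $e^{k\epsilon}\ge \tau$, hence once more $k\ge \lceil\frac{1}{\epsilon}\log\tau\rceil$.

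There is no real obstacle here; the only subtlety worth mentioning explicitly is that the hypothesis $J(D)\ne 0$ is needed precisely so that we can divide by $J(D)$ without trivialising the inequality, and that the sign of $J(D)$ in each case dictates whether the inequality direction is preserved or reversed. Everything else is a direct algebraic consequence of Theorem~\ref{thm:DP-defence-eps} and the integrality of $k$.
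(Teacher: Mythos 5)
Your proposal is correct and follows essentially the same route as the paper: combine the lower bound of Theorem~\ref{thm:DP-defence-eps} with the attacker's target inequality, cancel $J(D)$ (using $J(D)\neq 0$), and invoke integrality of $k$. The only difference is that you spell out the sign-flip in the $C\le 0$ case, which the paper dismisses with ``similarly''; this is a welcome bit of extra care but not a different argument.
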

\begin{proof}
For $C\ge 0$, by~\thmref{thm:DP-defence-eps}, if  $ \tilde D\in \B(D,k)$ then $J( \tilde D)\ge e^{-k\epsilon}J( D)$. Since we require $J( \tilde D)\le {1 \over \tau} J(D)$, we have $e^{-k\epsilon}J( D)\le {1 \over \tau} J(D)$, thus $k\ge \lceil\frac{1}{\epsilon}\log {\tau}\rceil$. Similarly, one can prove the case for $C\le 0$.
\end{proof}

To generalize~\thmref{thm:DP-defence-eps} to
$(\epsilon,\delta)$-private learners, we need an additional
assumption that $C$ is bounded.

\begin{restatable}{theorem}{thmDPdefenceEpsDelta}
\label{thm:DP-defence-epsdelta}
Let $\M$ be $(\epsilon,\delta)$-differentially-private and $J(\tilde D)$
be the attack cost, where $|C|\le \bar C$ and $\tilde D\in \B(D,k)$. Then,
\begin{equation}
\begin{aligned}
& J( \tilde D)\ge \max\{e^{-k\epsilon}(J( D)+\frac{\bar
C\delta}{e^{\epsilon}-1})-\frac{\bar C\delta}{e^{\epsilon}-1},0\} \quad (C\ge 0),\\
& J( \tilde D)\ge \max\{e^{k\epsilon}(J( D)-\frac{\bar
C\delta}{e^{\epsilon}-1})+\frac{\bar C\delta}{e^{\epsilon}-1},-\bar C\} \quad (C\le 0).
\end{aligned}
\end{equation}
\end{restatable}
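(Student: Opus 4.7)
The plan is to mirror the two-step structure of \thmref{thm:DP-defence-eps}: first establish a single-step bound for $k=1$, then iterate $k$ times along a path $D = D_0, D_1, \dots, D_k = \tilde D$ in which each consecutive pair differs by one item. The only conceptual difference from the pure-$\epsilon$ case is that each application of the DP inequality contributes an additive $\delta$-term, which accumulates as a geometric series. The assumption $|C|\le \bar C$ is precisely what keeps this accumulation finite: the integral identity used in \lemref{lem:DP-defence-eps} is now taken over a bounded interval of length $\bar C$, so the $\delta$-slack integrates to $\bar C \delta$ rather than blowing up.

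For the single-step bound with $C \ge 0$, I would re-run the argument of \lemref{lem:DP-defence-eps} with the superlevel sets $\Theta(a)=\{\theta : C(\theta) > a\}$ for $a \in [0,\bar C]$, but now applying $\mathbf{P}(\M(D,b)\in \Theta(a)) \le e^{\epsilon}\mathbf{P}(\M(\tilde D,b)\in \Theta(a)) + \delta$. Integrating yields $J(D) \le e^{\epsilon} J(\tilde D) + \bar C \delta$, i.e.\ $J(\tilde D) \ge e^{-\epsilon} J(D) - e^{-\epsilon}\bar C \delta$. Symmetrically, for $C \le 0$, using the sublevel sets $\Theta(a)=\{\theta : C(\theta) < a\}$ for $a\in[-\bar C,0]$ and the DP inequality in the reverse direction gives $J(\tilde D) \ge e^{\epsilon} J(D) - \bar C \delta$.

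Next, I would iterate the single-step bound $k$ times along a path of one-item modifications. For $C\ge 0$ the recursion $J(D_{i+1}) \ge e^{-\epsilon}J(D_i) - e^{-\epsilon}\bar C\delta$ unrolls to
\begin{equation}
J(\tilde D) \ge e^{-k\epsilon} J(D) - \bar C\delta \sum_{i=1}^{k} e^{-i\epsilon}
= e^{-k\epsilon} J(D) - \bar C\delta \cdot \frac{1 - e^{-k\epsilon}}{e^{\epsilon}-1},
\end{equation}
which, after regrouping, is exactly $e^{-k\epsilon}(J(D) + \frac{\bar C \delta}{e^{\epsilon}-1}) - \frac{\bar C \delta}{e^{\epsilon}-1}$. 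For $C\le 0$ the analogous recursion $J(D_{i+1}) \ge e^{\epsilon} J(D_i) - \bar C\delta$ produces $J(\tilde D) \ge e^{k\epsilon} J(D) - \bar C\delta \cdot \frac{e^{k\epsilon}-1}{e^{\epsilon}-1}$, which rearranges to the stated expression. Finally, I would take the pointwise max with the trivial bounds $0$ (from $C\ge 0$) and $-\bar C$ (from $C\ge -\bar C$), both of which always hold, to recover the theorem as stated.

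The only subtle step is the bookkeeping of the geometric sum together with the sign conventions for the $C\le 0$ branch, where both the integral identity carries a leading minus sign and the DP inequality is applied with $\tilde D$ and $D$ swapped in roles; I would be careful to verify that the same additive $\bar C\delta$ term appears with the correct sign at each step so that the telescoping argument yields the compact form $\frac{\bar C\delta}{e^{\epsilon}-1}$ after regrouping. Everything else is a direct generalization of the pure-$\epsilon$ proof.
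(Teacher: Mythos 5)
Your proposal is correct and follows essentially the same route as the paper: a single-step $(\epsilon,\delta)$ bound obtained from the integral identity over the bounded range $[0,\bar C]$ (resp.\ $[-\bar C,0]$), iterated $k$ times along a path of one-item modifications, then combined with the trivial bounds $0$ and $-\bar C$. The only cosmetic difference is that you unroll the recursion and sum the geometric series explicitly, whereas the paper shifts both sides by the fixed point $\frac{\bar C\delta}{e^{\epsilon}-1}$ to telescope; the two manipulations are algebraically identical.
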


\begin{proof}
We first consider $0\le C\le \bar C$. For any poisoned data $\tilde D\in \B(D,k)$, one can view it as being produced by manipulating the clean data $D$ $k$ times, where each time one item is changed. Thus we have a sequence of intermediate $\tilde D_i$ $(0\le i\le k)$, the data set where $i$ items are poisoned. We define $\tilde D_0=D$ and $\tilde D_k=\tilde D$. Let $J_i=J(\tilde D_i)$, then by~\lemref{lem:DP-defence-eps-delta}
\begin{equation}
J_i\ge e^{-\epsilon} (J_{i-1}-\bar C\delta), \forall 1\le i\le k.
\end{equation}
Adding $\frac{\bar C\delta}{e^{\epsilon}-1}$ to both sides, we have
\begin{equation}\label{recursive}
J_i+\frac{\bar C\delta}{e^{\epsilon}-1}\ge e^{-\epsilon} (J_{i-1}-\bar C\delta)+\frac{\bar C\delta}{e^{\epsilon}-1}=e^{-\epsilon}(J_{i-1}+\frac{\bar C\delta}{e^{\epsilon}-1}).
\end{equation}
We then recursively apply~\eqref{recursive} $k$ times to obtain
\begin{equation}
J_k+\frac{\bar C\delta}{e^{\epsilon}-1}\ge e^{-k\epsilon}(J_0+\frac{\bar C\delta}{e^{\epsilon}-1})=e^{-k\epsilon}(J(D)+\frac{\bar C\delta}{e^{\epsilon}-1}).
\end{equation}
Note $J_k=J(\tilde D_k)=J(\tilde D)$, thus we have
\begin{equation}
J( \tilde D)\ge e^{-k\epsilon}(J( D)+\frac{\bar C\delta}{e^{\epsilon}-1})-\frac{\bar C\delta}{e^{\epsilon}-1}.
\end{equation}
Also note that $J( \tilde D)\ge 0$ trivially holds due to $C\ge 0$, thus
\begin{equation}
J( \tilde D)\ge \max\{e^{-k\epsilon}(J( D)+\frac{\bar C\delta}{e^{\epsilon}-1})-\frac{\bar C\delta}{e^{\epsilon}-1},0\}.
\end{equation}
Next, we consider $-\bar C\le C\le 0$. Define $J_i$ as before, then by~\lemref{lem:DP-defence-eps-delta}
\begin{equation}
J_i\ge e^{\epsilon} J_{i-1}-\bar C\delta, \forall 1\le i\le k.
\end{equation}
Subtracting both sides by $\frac{\bar C\delta}{e^\epsilon-1}$ we have
\begin{equation}
J_i-\frac{\bar C\delta}{e^\epsilon-1}\ge e^{-\epsilon} (J_{i-1}-\frac{\bar C\delta}{e^\epsilon-1}), \forall 1\le i\le k.
\end{equation}
Thus we have
\begin{equation}
J(\tilde D)-\frac{\bar C\delta}{e^\epsilon-1}\ge e^{-k\epsilon}(J(D)-\frac{\bar C\delta}{e^\epsilon-1}).
\end{equation}
Combined with the trivial lower bound $-\bar C$, we have
\begin{equation}
J(\tilde D)\ge \max\{e^{-k\epsilon}(J(D)-\frac{\bar C\delta}{e^\epsilon-1})+\frac{\bar C\delta}{e^\epsilon-1},-\bar C\}.
\end{equation}
\end{proof}

\begin{restatable}{lemma}{lemLBEpsDelta}
\label{lem:DP-defence-eps-delta}
Let $\M$ be an $(\epsilon,\delta)$-differentially-private learner. Let $J(\tilde D)$ be the attack cost, where $|C|\le \bar C$ and $\tilde D\in \B(D,1)$, then 
\begin{equation}
\begin{aligned}
& J( \tilde D)\ge e^{-\epsilon} (J( D)-\bar C\delta), && (C\ge 0),\\
& J( \tilde D)\ge e^{\epsilon} J( D)-\bar C\delta, && (C\le 0).
\end{aligned}
\end{equation}
\end{restatable}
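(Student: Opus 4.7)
The plan is to mirror the proof of \lemref{lem:DP-defence-eps}, but keep track of the extra additive $\delta$ slack that $(\epsilon,\delta)$-privacy introduces; the boundedness assumption $|C|\le \bar C$ is what lets us turn that pointwise slack into a finite additive loss after integration.

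First, for the case $0\le C\le \bar C$, I would define the superlevel sets $\Theta(a)=\{\theta: C(\theta)>a\}$ as before, but now use the integral identity only over the effective range $[0,\bar C]$, since $C$ is bounded so $\P{}{C(\M(D,b))>a}=0$ for $a>\bar C$. Applying $(\epsilon,\delta)$-differential privacy with the single-item neighbors $D$ and $\tilde D$, I get the pointwise inequality
\begin{equation*}
\P{}{\M(D,b)\in\Theta(a)}\le e^\epsilon\P{}{\M(\tilde D,b)\in\Theta(a)}+\delta
\end{equation*}
for every $a\ge 0$. Integrating over $a\in[0,\bar C]$ yields $J(D)\le e^\epsilon J(\tilde D)+\bar C\delta$, and rearranging gives the claimed bound $J(\tilde D)\ge e^{-\epsilon}(J(D)-\bar C\delta)$.

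Next, for the case $-\bar C\le C\le 0$, I would follow the same template as in \lemref{lem:DP-defence-eps} with sublevel sets $\Theta(a)=\{\theta: C(\theta)<a\}$ for $a\le 0$, and the integral representation $J(\tilde D)=-\int_{-\bar C}^{0}\P{}{\M(\tilde D,b)\in\Theta(a)}\,da$ (again using $|C|\le \bar C$ to truncate the domain of integration). Applying $(\epsilon,\delta)$-DP in the direction $\tilde D\to D$, I have $\P{}{\M(\tilde D,b)\in\Theta(a)}\le e^\epsilon \P{}{\M(D,b)\in\Theta(a)}+\delta$, so negating and integrating over $[-\bar C,0]$ flips the inequality and gives $J(\tilde D)\ge e^\epsilon J(D)-\bar C\delta$.

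The step I expect to require the most care is the handling of the $\delta$ term: in the one-step $\epsilon$-only proof the inequality factored multiplicatively, but here integrating the additive $\delta$ against the $da$ variable produces $\delta$ times the length of the interval of integration. The boundedness hypothesis $|C|\le\bar C$ is precisely what makes this interval finite (length $\bar C$ on either side), giving the additive loss $\bar C\delta$ that appears in both bounds. Once this is set up cleanly, each case is a one-line rearrangement, exactly paralleling the pure-$\epsilon$ argument.
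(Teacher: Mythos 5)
Your proposal is correct and follows essentially the same route as the paper: the same superlevel/sublevel sets, the same truncated integral identities over $[0,\bar C]$ and $[-\bar C,0]$, and the same pointwise $(\epsilon,\delta)$-DP inequality integrated to produce the additive $\bar C\delta$ term. Your remark that the boundedness of $C$ is precisely what makes the integrated $\delta$ slack finite is exactly the role it plays in the paper's argument.
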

\begin{proof}
We first consider $0\le C\le \bar C$. Define $\Theta(a)=\{\theta:  C(\theta)> a\}$, $\forall a\ge 0$. Since $\tilde D\in \B(D, 1)$, $\tilde D$ and $D$ differ by at most one item, thus the differential privacy guarantees that $\forall a\ge0$,
\begin{equation}
\P{}{\M(D,b)\in \Theta(a)}\le e^{\epsilon}\P{}{\M(\tilde D, b)\in \Theta(a)}+\delta.
\end{equation}
Note that $J(D)$ is non-negative, thus by the integral identity, we have
\begin{equation}
\begin{aligned}
J(D)&=\E{b}{C(\M(D,b))}= \int_{0}^{\bar C}\P{}{C(\M(D,b))> a}da=\int_{0}^{\bar C}\P{}{\M(D,b)\in \Theta(a)}da\\
&\le  \int_{0}^{\bar C} \left(e^{\epsilon}\P{}{\M(\tilde D, b)\in \Theta(a)}+\delta\right) da=e^{\epsilon}\int_0^{\bar C} \P{}{C(\M(\tilde D,b))> a}da+\bar C\delta=e^{\epsilon}J( \tilde D)+\bar C\delta.
\end{aligned}
\end{equation}
Rearranging we have $J( \tilde D)\ge e^{-\epsilon} (J( D)-\bar C\delta)$.

Next we consider $-\bar C\le C\le 0$. Define $\Theta(a)=\{\theta:  C(\theta)< a\}$, $\forall a\le 0$ Again, due to $\M$ being differentially private, we have
\begin{equation}
\P{}{\M(\tilde D,b)\in \Theta(a)}\le e^{\epsilon}\P{}{\M(D, b)\in \Theta(a)}+\delta.
\end{equation}
Since $J(\tilde D)$ is non-positive, by the integral identity, we have
\begin{equation}
\begin{aligned}
J(\tilde D)&=\E{b}{C(\M(\tilde D,b))}= -\int_{-\bar C}^0\P{}{C(\M(\tilde D,b))< a}da\\
&=-\int_{-\bar C}^0\P{}{\M(\tilde D,b)\in \Theta(a)}da\ge  -\int_{-\bar C}^0 \left(e^{\epsilon}\P{}{\M(D, b)\in \Theta(a)}+\delta \right) da\\
&=-e^{\epsilon}\int_{-\bar C}^0 \P{}{C(\M(D,b))< a}da-\bar C\delta=e^{\epsilon}J(D)-\bar C\delta.
\end{aligned}
\end{equation}
\end{proof}

%\comment{where exactly?}

As a corollary, we can lower-bound the minimum number of
modifications needed to sufficiently reduce attack cost.
\begin{restatable}{corollary}{corNumEpsDelta}
\label{col:num-of-items-epsdelta}
Let $\M$ be an $(\epsilon,\delta)$-differentially-private learner. 
Assume $J( D)\neq 0$.
Then
\begin{enumerate}[i).]
\item
$(0\le C\le \bar C)$: in order to achieve $J( \tilde D)\le {1 \over \tau} J(D)$ for $\tau\ge 1$,
the attacker has to modify at least the following number of items in $D$.
\begin{equation}\label{eq:num-of-items-eps-delta}
k \ge \lceil \frac{1}{\epsilon}\log \frac{(e^{\epsilon}-1) J(D)\tau+\bar C\delta\tau}{(e^{\epsilon}-1) J(D)+\bar C\delta\tau} \rceil.
\end{equation}
\item
$(-\bar C \le C\le 0)$: in order to achieve $J( \tilde D)\le \tau J(D)$ for $\tau\in [1, -\frac{\bar C}{J(D)}]$,
the attacker has to modify at least the following number of items in $D$.
\begin{equation}\label{eq:num-of-items-eps-delta-negative}
k \ge \lceil \frac{1}{\epsilon}\log \frac{(e^{\epsilon}-1) J(D)\tau-\bar C\delta}{(e^{\epsilon}-1) J(D)-\bar C\delta} \rceil.
\end{equation}
\end{enumerate}
\end{restatable}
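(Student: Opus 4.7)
The plan is to invert each branch of Theorem~\ref{thm:DP-defence-epsdelta} to extract the smallest $k$ that still leaves the attacker's target value reachable, and then take ceilings to enforce integrality. The underlying logic is simple: if the theorem's lower bound on $J(\tilde D)$ already exceeds the attacker's goal, then no $\tilde D \in \B(D,k)$ can succeed, so a necessary condition for success is that the bound itself accommodates the target. In both parts I would first note that the $\max$ with a trivial bound in Theorem~\ref{thm:DP-defence-epsdelta} is not binding under the stated hypotheses, so only the exponential branch has to be inverted.

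For part (i), I would start from $J(\tilde D) \ge e^{-k\epsilon}(J(D) + \frac{\bar C \delta}{e^\epsilon - 1}) - \frac{\bar C \delta}{e^\epsilon - 1}$ and require it to be at most $\frac{1}{\tau} J(D)$. Isolating $e^{-k\epsilon}$ by moving the additive constant to the right and dividing by the positive quantity $J(D) + \frac{\bar C \delta}{e^\epsilon - 1}$, taking $\log$, and dividing by $\epsilon > 0$ yields a lower bound on $k$. A cosmetic step of multiplying numerator and denominator by $\tau(e^\epsilon - 1)$ reshapes it into~\eqref{eq:num-of-items-eps-delta}. The trivial $0$-branch of the $\max$ is never binding here because $\frac{1}{\tau} J(D) \ge 0$ under the stated assumptions, so the exponential branch alone governs the minimum $k$.

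For part (ii), I would begin from $J(\tilde D) \ge e^{k\epsilon}(J(D) - \frac{\bar C \delta}{e^\epsilon - 1}) + \frac{\bar C \delta}{e^\epsilon - 1}$ and impose $\tau J(D)$ as the target. The hypothesis $\tau \le -\bar C / J(D)$ is precisely what keeps the target above the trivial $-\bar C$ branch of the $\max$, so again only the exponential branch needs to be inverted. The main obstacle here is sign bookkeeping: since $J(D) < 0$ in this regime, the coefficient $J(D) - \frac{\bar C \delta}{e^\epsilon - 1}$ is negative, and dividing by it flips the inequality and places $e^{k\epsilon}$ on the larger side. After clearing denominators one obtains the ratio of two negative quantities shown inside the $\log$ of~\eqref{eq:num-of-items-eps-delta-negative}; before taking $\log$ I would verify that this ratio is at least $1$, which reduces to $(\tau - 1)(e^\epsilon - 1)|J(D)| \ge 0$ and is immediate from $\tau \ge 1$, so the resulting bound on $k$ is non-negative. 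Taking $\log$ and then the ceiling completes the argument.
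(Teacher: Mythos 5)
Your proposal is correct and follows essentially the same route as the paper: take the exponential branch of the bound in Theorem~\ref{thm:DP-defence-epsdelta}, impose the attacker's target as an upper bound, and rearrange to isolate $k$. The paper compresses this to ``rearranging gives the result'' and says part (ii) is ``similar''; you supply the details it omits, in particular the sign flip when dividing by the negative quantity $J(D)-\frac{\bar C\delta}{e^{\epsilon}-1}$ and the check that the argument of the logarithm is at least $1$, both of which are handled correctly.
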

\begin{proof}
For $0\le C\le \bar C$, by~\thmref{thm:DP-defence-epsdelta}, if $ \tilde D\in \B(D,k)$, then $J( \tilde D)\ge \max\{e^{-k\epsilon}(J( D)+\frac{\bar C \delta}{e^{\epsilon}-1})-\frac{\bar C \delta}{e^{\epsilon}-1},0\}$. Since we require $J( \tilde D)\le \frac{1}{\tau}J(D)$, we have $e^{-k\epsilon}(J( D)+\frac{\bar C \delta}{e^{\epsilon}-1})-\frac{\bar C \delta}{e^{\epsilon}-1}\le\frac{1}{\tau}J(D)$. Rearranging gives the result. The proof for $-\bar C\le C\le 0$ is similar.
\end{proof}

Note that in~\eqref{eq:num-of-items-eps-delta}, the lower bound on $k$ is always
finite even if $\tau=\infty$, which means an attacker might be able to reduce
the attack cost $J(\tilde D)$ to 0. This is in contrast to
$\epsilon$-differentially-private learner, where $J( \tilde D)=0$ can never
be achieved. The weaker $(\epsilon,\delta)$-differential privacy
guarantee gives weaker protection against attacks.

%\begin{proof}
%Let $J_k=\E{b}{ C(\M(\tilde D, b))}$, where $a$ corresponds to an attack that poisons $k$ instances in the training set. According to~\lemref{lem:LB},
%\begin{equation}
%J_k\ge e^{-\epsilon}(R_{k-1}-\bar C \delta),
%\end{equation}
%which can be written as
%\begin{equation}
%J_k+\frac{\bar C \delta}{e^{\epsilon}-1}\ge e^{-\epsilon}(R_{k-1}+\frac{\bar C \delta}{e^{\epsilon}-1}).
%\end{equation}
%Applying the above inequality $k$ times we have
%\begin{equation}
%J_k+\frac{\bar C \delta}{e^{\epsilon}-1}\ge  e^{-k\epsilon}(J( D)+\frac{\bar C \delta}{e^{\epsilon}-1}),
%\end{equation}
%which gives
%\begin{equation}\label{eq:decrease-rate}
%J_k\ge e^{-k\epsilon}(J( D)+\frac{\bar C \delta}{e^{\epsilon}-1})-\frac{\bar C \delta}{e^{\epsilon}-1}.
%\end{equation}
%\end{proof}

% In summary, the minimum number of items that need to be poisoned to achieve a
% target attack goal against a private learner depends on the privacy
% parameters $\epsilon, \delta$. This protection weakens as the adversary is
% allowed to attack more items.

	\section{Data Poisoning Attacks on Private Learners}
\label{sec:attack}
The results in the previous section provide a theoretical bound on how effective
a data poisoning attack can be against a private learner. To evaluate how strong
the protection is in practice, we propose a range of attacks targeting general
differentially-private learners. Our adversary will \emph{modify} (not add or
delete) any continuous features (for classification or regression) and the
continuous labels (for regression) on at most $k$ items in the training set.
Restating the attack problem~\eqref{attack:opt}:
\begin{equation}\label{attack:opt2}
\begin{aligned}
\min_{ \tilde D\subset \B(D,k)} \quad & J(\tilde  D)=\E{b}{ C(\M(\tilde D, b))}
\end{aligned}
\end{equation}
This is a combinatorial problem---the attacker must pick $k$ items to modify. We
propose a two-step attack procedure.
\begin{enumerate}[step I)]
\item use heuristic methods to select $k$ items to poison.
\item apply stochastic gradient descent to reduce $J(\tilde D)$. 
\end{enumerate}
In step II), performing stochastic gradient descent could lead to poisoned data taking arbitrary features or labels.
However, differentially-private learners typically assume training examples are
bounded. To avoid trivially-detectable poisoned examples, we project poisoned
features and labels back to a bounded space after each iteration of SGD. We now
detail step II), assuming that the attacker has fixed $k$ items to poison. We'll
return to step I) later.

\subsection{SGD on Differentially-Private Victims (DPV)}
\label{DPV}
Our attacker uses stochastic gradient descent to minimize the attack cost $J(\tilde
D)$. We first show how to compute a stochastic gradient \textbf{with respect to
a training item}.
\begin{restatable}{proposition}{sgd}
\label{prop:sgd}
Assume $\M(\tilde D, b)$ is a differentiable function of $\tilde D$, then a
stochastic gradient of $J(\tilde D)$ w.r.t. a particular item $\tilde z_i$ is
$\frac{\partial C(\M(\tilde D, b))}{\partial \tilde z_i}$, where $b\sim\nu(b)$.
\end{restatable}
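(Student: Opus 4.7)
The plan is to establish that the quantity $\partial C(\M(\tilde D,b))/\partial \tilde z_i$ is an unbiased estimator of $\partial J(\tilde D)/\partial \tilde z_i$, which is exactly what ``stochastic gradient'' means. Once we have the interchange of differentiation and expectation, the result is immediate.

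First I would write the attack objective explicitly as an integral against the noise density,
\begin{equation*}
J(\tilde D) = \E{b}{C(\M(\tilde D,b))} = \int C(\M(\tilde D,b))\,\nu(b)\,db,
\end{equation*}
observing that only the integrand depends on $\tilde D$ (the noise distribution $\nu$ is independent of the training set). Then I would differentiate with respect to a single coordinate of a chosen item $\tilde z_i$ and invoke Leibniz's rule (equivalently, the dominated convergence theorem applied to difference quotients) to swap the derivative and the integral:
\begin{equation*}
\frac{\partial J(\tilde D)}{\partial \tilde z_i} = \int \frac{\partial C(\M(\tilde D,b))}{\partial \tilde z_i}\,\nu(b)\,db = \E{b}{\frac{\partial C(\M(\tilde D,b))}{\partial \tilde z_i}}.
\end{equation*}
The inner partial derivative is well-defined by the assumed differentiability of $\M(\tilde D,b)$ in $\tilde D$ together with the chain rule (assuming $C$ is differentiable on $\Theta$, which is implicit in the attack examples). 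Hence if $b \sim \nu$, the random vector $\partial C(\M(\tilde D,b))/\partial \tilde z_i$ has expectation equal to the true gradient $\partial J(\tilde D)/\partial \tilde z_i$, so by definition it is a stochastic gradient of $J$ with respect to $\tilde z_i$.

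The only real obstacle is justifying the interchange of $\partial/\partial\tilde z_i$ and $\E{b}{\cdot}$. This requires a domination or uniform integrability hypothesis on $\partial C(\M(\tilde D,b))/\partial \tilde z_i$ as a function of $b$ — e.g., a $\nu$-integrable envelope in a neighborhood of $\tilde z_i$. In the settings of interest (bounded feature/label space, Lipschitz losses, Gaussian or Laplace noise), this hypothesis holds easily, and I would note that it is implicitly assumed alongside the differentiability of $\M$. Everything else is a one-line chain-rule computation and the definition of an unbiased gradient estimator, so the proof will be very short.
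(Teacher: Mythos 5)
Your proposal is correct and matches the paper's own proof essentially step for step: both write $J(\tilde D)$ as an integral against $\nu$, exchange differentiation and integration, and conclude unbiasedness, with the interchange justified by a dominated-envelope regularity condition (which the paper verifies in its appendix for the specific learners considered). Nothing is missing.
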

\begin{proof}
Note that
\begin{equation}
\E{b}{\frac{\partial C(\M(\tilde D, b))}{\partial \tilde z_i}}=\int\frac{\partial C(\M(\tilde D, b))}{\partial \tilde z_i}\nu(b)db.
\end{equation}
%\comment{Are you using the total derivative?  It's not clear to me.}\YZ{Yes.}
Under certain conditions (see appendix for the details), one can exchange the order of integration and differentiation, and we have
\begin{equation}
\E{b}{\frac{\partial C(\M(\tilde D, b))}{\partial \tilde z_i}}=\frac{d \int C(\M(\tilde D,b))\nu(b)db}{d \tilde z_i}=\frac{dJ(\tilde D)}{d \tilde z_i}.
\end{equation}
Thus $\frac{\partial C(\M(\tilde D, b))}{\partial \tilde z_i}$ is a stochastic gradient. 
\end{proof}

%The detailed algorithm is illustrated in Algorithm~\ref{attack:step2-SGD}. Note that the stochastic gradient of $\tilde D$ breaks down to that on items $\tilde z_i$ where $i\in\A$.
%\begin{algorithm}[H]%[H]
%  \begin{algorithmic}[1]
%    \caption{Target Differentially-Private Learner}
%    \label{attack:step2-SGD}
%    \STATE \textbf{Input}: differentially-private learner $\M$, 
%     distribution of the privacy parameter $\nu(b)$,
%$n$-item clean dataset $D$, 
%the cost function for attack $C$,
%time horizon $T$, 
%the index set of the selected items $\A$, 
%and SGD step size $\eta_t$. 
%        \STATE \textbf{Ouput}: the poisoned dataset $\tilde D$.
%    \STATE $  \tilde D_1 \leftarrow  D$.
%    \FOR{$t=1,2,\ldots T-1$}
%    \STATE sample privacy parameter $b_t\sim\nu(b)$.\label{sgd1}
%    \STATE perform stochastic gradient descent: $ \tilde z_{i,t+1}\leftarrow \tilde z_{i,t}-\eta_t\frac{\partial C(\M(\tilde D,b))}{\partial \tilde z_i}\mid_{ \tilde D= \tilde D_t, b=b_t}$, $\forall i\in \A$.\label{sgd2}
%    \ENDFOR
%   \STATE $ \tilde D\leftarrow  \tilde D_T$
%        \RETURN  $ \tilde D$.
%  \end{algorithmic}
%\end{algorithm}

The concrete form of the stochastic gradient depends on the target private
learner $\M$. In the following, we derive the stochastic gradient for objective
perturbation and output perturbation in the context of parameter-targeting
attack with cost function $C(\tilde\theta)=\frac{1}{2}\|\tilde\theta-\theta^\prime\|^2$, where $\theta^\prime$ is the target model.

\subsubsection{Instantiating Attack on Objective Perturbation}
We first consider objective-perturbed private learners:
\begin{equation}\label{learner:obj}
\M(\tilde D,b)=\underset{\theta\in\Theta}\argmin\sum_{i=1}^n\ell(\theta, \tilde z_i)+\lambda \Omega(\theta)+b^\top \theta,
\end{equation}
where $\ell$ is some convex learning loss, $\Omega$ is a regularization, and
$\Theta$ is the model space. Note that the noise $b$ enters the objective via
linear product with the model $\theta$.  By Proposition~\ref{prop:sgd}, the
stochastic gradient is $\frac{\partial C(\M(\tilde D, b))}{\partial \tilde z_i
}=\frac{\partial C(\tilde\theta)}{\partial \tilde z_i }$, where
$\tilde\theta=\M(\tilde D, b)$ is the poisoned model. 
By the chain rule, we have 
\begin{equation}\label{sgd:obj}
\frac{\partial C(\tilde\theta)}{\partial \tilde z_i}=\left(\frac{\partial \tilde\theta}{\partial \tilde z_i}\right)^\top\frac{dC(\tilde\theta)}{d\tilde\theta}.
\end{equation}  
Note that $\frac{dC(\tilde\theta)}{d\tilde\theta}=\tilde\theta-\theta^\prime$. Next we focus on
deriving $\frac{\partial \tilde\theta}{\partial \tilde z_i}$.
Since~\eqref{learner:obj} is a convex optimization problem, the learned model $\tilde\theta$ must
satisfy the following Karush-Kuhn-Tucker (KKT) condition:
\begin{equation}
f(\tilde \theta,\tilde z_i) := \sum_{j=1}^n\frac{\partial\ell(\tilde\theta,\tilde z_j)}{\partial \tilde\theta}+\lambda \frac{d\Omega(\tilde\theta)}{d\tilde\theta}+b=0.
\end{equation}
By using the derivative of implicit function, we have $\frac{\partial \tilde\theta}{\partial \tilde z_i}=-(\frac{\partial f}{\partial \tilde\theta})^{-1}\frac{\partial  f}{\partial \tilde z_i}$.
We now give two examples where the base learner is logistic regression and ridge regression, respectively.

\paragraph{Attacking Objective-Perturbed Logistic Regression.}
In the case $\ell(\theta, \tilde z)=\log(1+\exp(-\tilde y\theta^\top \tilde x))$ and $\Omega(\theta)=\frac{1}{2}\|\theta\|^2$, learner~\eqref{learner:obj} is objective-perturbed logistic regression:
\begin{equation}\label{logit:obj-attacked}
\M(\tilde D, b)=\underset{\theta\in\Theta}\argmin\sum_{i=1}^n\log(1+\exp(-\tilde y_i\theta^\top \tilde x_i))+\frac{\lambda}{2}\|\theta\|^2+b^\top \theta,
\end{equation}
where $\Theta=\R^d$. Our attacker will only modify the features, thus $\tilde y_i=y_i$. 
We now derive stochastic gradient for $\tilde x_i$, which is
\begin{equation}
\frac{\partial C(\tilde \theta)}{\partial \tilde x_i}=(\frac{\partial \tilde\theta}{\partial \tilde x_i})^\top\frac{dC(\tilde\theta)}{d\tilde\theta}.
\end{equation}
To compute $\frac{\partial  \tilde\theta}{\partial  \tilde x_i}$, we use the KKT condition of~\eqref{logit:obj-attacked}:
\begin{equation}\label{obj-logit-KKT}
\lambda\tilde\theta-\sum_{j=1}^n\frac{y_j\tilde x_j}{1+\exp(y_j\tilde\theta^\top \tilde x_j)}+b=0.
\end{equation}
Let $f(\tilde \theta,\tilde x_i)=\lambda\tilde\theta-\sum_{j=1}^n\frac{y_j\tilde x_j}{1+\exp(y_j\tilde\theta^\top \tilde x_j)}+b$ and define $s_j=\exp(y_j\tilde\theta^\top\tilde x_j)$, then one can verify that
\begin{equation}
\frac{\partial f}{\partial\tilde\theta}=\lambda I+\sum_{j=1}^n\frac{s_j\tilde x_j\tilde x_j^\top}{(1+s_j)^2}.
\end{equation}
and
\begin{equation}
\frac{\partial f}{\partial \tilde x_i}=\frac{s_i\tilde x_i\tilde\theta^\top}{(1+s_i)^2}-\frac{y_i}{1+s_i}I.
\end{equation}
Note that $\frac{\partial \tilde\theta}{\partial \tilde x_i}=-(\frac{\partial f}{\partial \tilde\theta})^{-1}\frac{\partial  f}{\partial \tilde x_i}$ and $\frac{dC(\tilde\theta)}{d\tilde\theta}=\tilde\theta-\theta^\prime$, therefore the stochastic gradient of $\tilde x_i$ is
\begin{equation}\label{obj-logit-sgdx}
\frac{\partial C(\tilde \theta)}{\partial \tilde x_i}=(\frac{\partial \tilde\theta}{\partial \tilde x_i})^\top\frac{dC(\tilde\theta)}{d\tilde\theta}=\left(\frac{y_i}{1+s_i}I-\frac{s_i\tilde\theta\tilde x_i^\top}{(1+s_i)^2}\right)\left(\lambda I+\sum_{j=1}^n\frac{s_j\tilde x_j\tilde x_j^\top}{(1+s_j)^2}\right)^{-1}(\tilde\theta-\theta^\prime).
\end{equation}
%The detailed derivation is in Appendix~\ref{app:obj-logit}.
Note that the noise $b$ enters into stochastic gradient through $\tilde\theta=\M(\tilde D, b)$.

\paragraph{Attacking Objective-Perturbed Ridge Regression.}
In the case $\ell(\theta, \tilde z)=\frac{1}{2}(\tilde y-\tilde x^\top\theta)^2$ and $\Omega(\theta)=\frac{1}{2}\|\theta\|^2$, learner~\eqref{learner:obj} is the objective-perturbed ridge regression:
\begin{equation}\label{ridge:obj-attacked}
\M(\tilde D, b)=\underset{\theta\in\Theta}\argmin \frac{1}{2}\|\tilde X\theta- \tilde y\|^2+\frac{\lambda}{2}\|\theta\|^2+b^\top \theta,\\
\end{equation}
where $\Theta=\{\theta \in \R^d: \|\theta\|_2\le \rho\}$, 
%\comment{what norm?}
$\tilde X\in \R  ^{n\times d}$ is the feature matrix, and $\tilde y\in \R^n$ is the label
vector. Unlike logistic regression, the objective-perturbed ridge regression
requires the model space to be bounded (see e.g.~\cite{kifer2012private}). The
attacker can modify both the features and the labels.
We first compute stochastic gradient for $\tilde x_i$, which is
\begin{equation}
\frac{\partial C(\tilde \theta)}{\partial \tilde x_i}=(\frac{\partial \tilde\theta}{\partial \tilde x_i})^\top\frac{dC(\tilde\theta)}{d\tilde\theta}.
\end{equation}
Since $\frac{dC(\tilde\theta)}{d\tilde\theta}=\tilde\theta-\theta^\prime$, it suffices to compute $\frac{\partial \tilde\theta}{\partial \tilde x_i}$, for which we use the KKT condition of~\eqref{ridge:obj-attacked}:
\begin{equation}\label{KKT:ridge}
\left\{
\begin{aligned}
&(\tilde X^\top \tilde X+\lambda I)\tilde\theta -\tilde X^\top \tilde y+b+\mu\tilde\theta=0\\
& \frac{1}{2}\|\tilde\theta\|^2\le \frac{1}{2}\rho^2\\
& \mu\ge 0\\
& \mu(\|\tilde\theta\|^2-\rho^2)=0,
\end{aligned}
\right.
\end{equation}
where $\mu$ is the Lagrange dual variable for constraint $\frac{1}{2}\|\tilde\theta\|^2\le \frac{1}{2}\rho^2$. Let $f(\tilde\theta ,\tilde x_i,\mu)=(\tilde X^\top \tilde X+\lambda I)\tilde\theta -\tilde X^\top \tilde y+b+\mu\tilde\theta$. 
One can rewrite $f(\tilde\theta,\tilde x_i,\mu)$ as
\begin{equation}\label{f-ridge}
\begin{aligned}
f(\tilde\theta,\tilde x_i, \mu)&=(\tilde x_i\tilde x_i^\top+ \sum_{j\neq i} \tilde x_j\tilde x_j^\top +\lambda I)\tilde\theta-\tilde x_i \tilde y_i-\sum_{j\neq i} \tilde x_j \tilde y_j+b+\mu\tilde\theta.
\end{aligned}
\end{equation}
Then one can verify that
\begin{equation}
\frac{\partial f}{\partial \tilde x_i}=\tilde x_i\tilde\theta ^\top+(\tilde x_i^\top\tilde\theta-\tilde y_i) I.
\end{equation}
and
\begin{equation}
\frac{\partial f}{\partial \tilde \theta}=\tilde X^\top \tilde X+(\lambda+\mu) I .
\end{equation}
Thus the stochastic gradient of $\tilde x_i$ is
\begin{equation}\label{obj-ridge-sgdx}
\begin{aligned}
\frac{\partial C(\tilde\theta)}{\partial \tilde x_i}=\left(\tilde\theta \tilde x_i^\top+(\tilde x_i^\top \tilde\theta-\tilde y_i) I\right)\left(\tilde X^\top \tilde X+(\lambda+\mu) I\right)^{-1}(\theta^\prime-\tilde\theta).
\end{aligned}
\end{equation}
Next we derive the stochastic gradient of $\tilde y_i$. Now view $f$ in~\eqref{f-ridge} as a function of $\tilde\theta$, $\tilde y_i$ and $\mu$, then one can verify that
\begin{equation}
\frac{\partial f}{\partial \tilde y_i}=-\tilde x_i.
\end{equation}
and
\begin{equation}
\frac{\partial f}{\partial \tilde \theta}=\tilde X^\top \tilde X+(\lambda+\mu) I .
\end{equation}
Thus the stochastic gradient of $\tilde y_i$ is
\begin{equation}\label{obj-ridge-sgdy}
\frac{\partial C(\tilde\theta)}{\partial \tilde y_i}=\tilde x_i^\top\left(\tilde X^\top \tilde X+(\lambda+\mu) I\right)^{-1}(\tilde\theta -\theta^\prime).
\end{equation}
%The detailed derivation is in Appendix~\ref{app:obj-ridge}.
Note that the noise $b$ enters into the stochastic gradient through $\tilde\theta=\M(\tilde D, b)$.

\subsubsection{Instantiating Attack on Output Perturbation}
We now consider the output perturbation mechanism:
\begin{equation}\label{learner:out}
\M(\tilde D,b)=b+\underset{\theta\in\Theta}\argmin \left\{ \sum_{i=1}^n\ell(\theta,\tilde z_i)+\lambda \Omega(\theta) \right\}.
\end{equation}
The derivations of stochastic gradient are
similar to objective perturbation, thus we skip the details.  Again, we
instantiate on two examples where the base learner is logistic regression and
ridge regression, respectively.

\paragraph{Attacking Output-Perturbed Logistic Regression.}
The output-perturbed logistic regression takes the following form:
\begin{equation}\label{logistic:out}
\M(\tilde D,b)=b+\underset{\theta\in\R^d}\argmin\left\{\sum_{i=1}^n\log(1+\exp(-\tilde y_i\theta^\top \tilde x_i))+\frac{\lambda}{2}\|\theta\|^2 \right\},
\end{equation}
Let $s_j=\exp(y_j(\tilde\theta-b)^\top\tilde x_j)$, then the stochastic gradient of $\tilde x_i$ is
\begin{equation}
\frac{\partial C(\tilde\theta)}{\partial \tilde x_i}=\left(\frac{y_i}{1+s_i}I-\frac{s_i(\tilde\theta-b)\tilde x_i^\top}{(1+s_i)^2}\right)
\left(\lambda I+\sum_{j=1}^n\frac{s_j\tilde x_j\tilde x_j^\top}{(1+s_j)^2}\right)^{-1}(\tilde\theta-\theta^\prime).
\end{equation}

\paragraph{Attacking Output-Perturbed Ridge Regression.}
The output-perturbed ridge regression takes the following form
\begin{equation}\label{ridge:out}
\M(\tilde D,b)=b+\underset{\theta\in\Theta}\argmin \left\{ \frac{1}{2}\|\tilde X\theta-\tilde y\|^2+\frac{\lambda}{2}\|\theta\|^2\right\},
\end{equation}
where $\Theta=\{\theta:\|\theta\|_2\le \rho\}$. The KKT condition of~\eqref{ridge:out} is
\begin{equation}
\left\{
\begin{aligned}
&(\tilde X^\top \tilde X+\lambda I)(\tilde\theta-b) -\tilde X^\top \tilde y+\mu(\tilde\theta-b)=0\\
& \frac{1}{2}\|\tilde\theta-b\|^2\le \frac{1}{2}\rho^2\\
& \mu\ge 0\\
& \mu(\|\tilde\theta-b\|^2-\rho^2)=0.
\end{aligned}
\right.
\end{equation}
Then one can verify that the stochastic gradients of $\tilde x_i$ and $\tilde y_i$ are
\begin{equation}
\begin{aligned}
\frac{\partial C(\tilde\theta)}{\partial \tilde x_i}=&\left( (\tilde\theta-b) \tilde x_i^\top+\tilde x_i^\top (\tilde\theta-b) I-\tilde y_iI \right) \left( \tilde X^\top \tilde X+(\lambda +\mu) I)\right)^{-1}(\theta^\prime-\tilde\theta).\end{aligned}
\end{equation}
\begin{equation}
\frac{\partial C(\tilde\theta)}{\partial \tilde y_i}=\tilde x_i^\top \left( \tilde X^\top \tilde X+(\lambda+\mu) I\right)^{-1}(\tilde\theta -\theta^\prime).
\end{equation}

\subsection{SGD on Surrogate Victims (SV)}
\label{SV}
We also consider an alternative, simpler way to perform step II). 
% The noise $b$ is what produces the privacy guarantee but also the difficult-to-attack stochastic behaviors.
When $b=\mathbf{0}$, the differentially-private learning algorithms revert to the base learner which returns a deterministic model $\M(\tilde D, \mathbf{0})$.
The adversary can simply attack the base learner (without $b$) as a surrogate for
the differentially-private learner $\M$ (with $b$)
by solving the following problem:
\begin{equation}\label{obj:attack-vanilla}
\min_{\tilde D}C(\M(\tilde D, \mathbf{0})).
\end{equation}
Since the objective is deterministic, we can work with its gradient
rather than its stochastic gradient, plugging in $b=\mathbf{0}$ into all
derivations in section~\ref{DPV}. Note that the attack found
by~\eqref{obj:attack-vanilla} will still be evaluated w.r.t.
differentially-private learners in our experiments.
%The surrogate-victim attack is illustrated in Algorithm~\ref{attack:step2-SV}.
%\begin{algorithm}[H]%[H]
%  \begin{algorithmic}[1]
%    \caption{Target Surrogate Victim}
%    \label{attack:step2-SV}
%    \STATE \textbf{Input}: differentially-private learner $\M$, 
%     distribution of the privacy parameter $\nu(b)$,
%$n$-item clean dataset $D$, 
%the cost function for attack $C$,
%time horizon $T$, 
%the index set of the selected items $\A$,
%and SGD step size $\eta_t$. 
%    \STATE \textbf{Ouput}: the poisoned dataset $\tilde D$.
%    \STATE $  \tilde D_1 \leftarrow  D$.
%    \FOR{$t=1,2,\ldots T-1$}
%%    \STATE compute poisoned model: $\tilde\theta_{t}\leftarrow \M(\tilde D_t, b_t)$.\label{sgd1}
%    \STATE perform gradient descent: $ \tilde z_{i,t+1}\leftarrow \tilde z_{i,t}-\eta_t\frac{dC(\M(\tilde D,\mathbf{0}))}{d \tilde z_i}\mid_{ \tilde D= \tilde D_t}$, $\forall i\in \A$.
%%\comment{where is $\tilde\theta_{t}$ used exactly?  not clear.}
%    \ENDFOR
%   \STATE $ \tilde D\leftarrow  \tilde D_T$
%        \RETURN  $ \tilde D$.
%  \end{algorithmic}
%\end{algorithm}

\subsection{Selecting Items to Poison}
Now, we return to step I) of our attack: how can we select the $k$ items for
poisoning? We give two heuristic methods: shallow and deep selection.

\textbf{Shallow selection} selects top-$k$ items with the largest initial gradient norm $\|\frac{\partial J(\tilde D)}{ \partial \tilde z_i}\mid_{\tilde D=D}\|$.
Intuitively, modifying these items will reduce the attack cost the most, at
least initially.  The precise gradients to use during selection depend on the
attack in step II). When targeting differentially-private learners directly,
%since the objective is $J(\tilde D)$, the attacker evaluates the gradient of $J(\tilde D)$ of each clean item $z_i$. 
computing the gradient of $J(\tilde D)$ is difficult. We use Monte-Carlo sampling to approximate the
gradient $g_i$ of item $z_i$: $g_i
\approx\frac{1}{m}\sum_{s=1}^m\frac{\partial}{\partial \tilde z_i}C(\M(\tilde
D,b_s))\mid_{\tilde D=D}$, where $b_s$ are random samples of privacy parameters.
Then, the attacker picks the $k$ items with the largest $\|g_i\|$ to poison. When
targeting surrogate victim, the objective is $C(\M(\tilde D, \mathbf{0}))$, thus
the attacker computes the gradient of $C(\M(\tilde D, \mathbf{0}))$ of each
clean item $z_i$: $g_i = \frac{d}{d\tilde z_i}C(\M(\tilde
D,\mathbf{0}))\mid_{\tilde D=D}$, and then picks those $k$ items with the
largest $\|g_i\|$. 

\textbf{Deep selection} selects items by estimating the influence of an item on
the final attack cost. When targeting a differentially-private victim directly,
the attacker first solves the following \textbf{relaxed attack} problem:
$
\min_{ \tilde D} J( \tilde D)+\alpha R(\tilde D).
$
Importantly, here $\tilde D$ allows changes to all training items, not just $k$ of them.
$R(\tilde D)$ is a regularizor penalizing data modification with
weight $\alpha>0$. We take $R(\tilde D)=\sum_{i=1}^n
r(\tilde z_i)$, where $r(\tilde z_i)$ is the distance between the poisoned item
$\tilde z_i$ and the clean item $z_i$. We define $r(\tilde
z_i)=\frac{1}{2}\|\tilde x_i-x_i\|^2$ for logistic regression and $r(\tilde
z_i)=\frac{1}{2}\|\tilde x_i-x_i\|^2+\frac{1}{2}(\tilde y_i-y_i)^2$ for ridge
regression. After solving the relaxed attack problem with SGD, the attacker evaluates the
amount of change $r(\tilde z_i)$ for all training items and pick the top $k$ to poison.  When
targeting a surrogate victim, the attacker can instead solve the following
relaxed attack first:
$
\min_{ \tilde D} C(\M(\tilde D,\mathbf{0}))+\alpha R(\tilde D),
$
and then select $k$ top items. 

In summary, we have four attack algorithms based on combinations of
methods used in step I) and step II): shallow-SV, shallow-DPV, deep-SV and
deep-DPV. In the following, we evaluate the performance of these four
algorithms.

	\section{Experiments}
We now evaluate our attack with experiments, taking objective-perturbed
learners~\cite{kifer2012private} and output-perturbed
learners~\cite{chaudhuri2011differentially} as our victims. Through the
experiment, we fix a constant step size $\eta=1$ for (stochastic) gradient descent. After each iteration of SGD, we project poisoned items to ensure feature norm is at most 1, and label stays in $[-1,1]$.
To perform shallow selection when targeting DPV, we draw $10^3$ samples of parameter $b$ to evaluate the gradient of each clean item. 
To perform deep selection, we use $\alpha=10^{-4}$ to solve the corresponding relaxed attack problem.

\subsection{Attacks Intelligently Modify Data}
\label{sec:1D}
As a first example, we use label-aversion attack to illustrate that our attack modifies data intelligently.
The victim is an objective-perturbed learner for $\epsilon$-differentially private logistic regression, with $\epsilon=0.1$ and regularizer $\lambda=10$. 
%The privacy parameter $b$ is drawn from gamma distribution $\Gamma(0,2)$ according to~\cite{kifer2012private}. 
The training set contains $n=21$ items uniformly sampled in the interval $[-1,1]$ with labels $y_i=\ind{x_i\ge 0}$, see Figure~\ref{fig:combo}(\subref{fig:traj1D}).
To generate the evaluation set,  we construct $m=21$ evenly-spaced items in the interval $[-1,1]$, i.e., $x_i^*=0.1i, -10\le i\le10$ and labeled as $y_i^*=\ind{x_i^*\ge 0}$. 
The cost function is defined as $C(\tilde \theta)=-\frac{1}{m}\sum_{i=1}^m\log(1+\exp(-y_i^* {x_i^*}^\top \tilde\theta))$.
To achieve small attack cost $J$, a negative model $\tilde \theta<0$ is
desired.\footnote{%
Differentially-private machine learning typically considers homogeneous learners, thus the 1D model is just a scalar.}
We run deep-DPV with $k=n$ (attacker can modify all items) 
\footnote{Since $k=n$, the attacker does not need to select items in step I).} for $T=300$
iterations.  In Figure~\ref{fig:combo}(\subref{fig:traj1D}), we show the
position of the poisoned items on the vertical axis as the SGD iteration, $t$,
grows. As expected, the attacker flips the positions of positive (blue) items
and negative (red) items. As a result, our attack causes the learner to find a
model with $\tilde\theta<0$.

\subsection{Attacks Can Achieve Different Goals}
\label{exp:ADG}
We now study a 2D example to show that our attack is effective for different attack goals. 
The victim is the same as in section~\ref{sec:1D}.
The clean training set in Figure~\ref{fig:combo}(\subref{fig:2D-training})  contains $n=317$ items uniformly sampled in a unit sphere with labels $y_i=\ind{x_i^\top\theta^*\ge 0}$ where $\theta^*=(1,1)$.
We now describe the attack settings for different attack goals.

\paragraph{Label-aversion attack.} 
We generate an evaluation set containing $m=317$ grid points $(x_i^*,y_i^*)$ lying in a unit sphere and labeled by a vertical decision boundary, see Figure~\ref{fig:combo}(\subref{fig:2D-evaluation}). 
The cost function is defined as $C(\tilde\theta)=-\frac{1}{m}\sum_{i=1}^m\log(1+\exp(-y_i^* {x_i^*}^\top \tilde\theta))$. 

\paragraph{Label-targeting attack.} 
The evaluation set remains the same as in the label-aversion attack. 
The cost function is defined as $C(\tilde\theta)=\frac{1}{m}\sum_{i=1}^m\log(1+\exp(-y_i^* {x_i^*}^\top \tilde\theta))$.

\paragraph{Parameter-targeting attack.} 
We first train vanilla logistic regression on the evaluation set to obtain the target model $\theta^\prime=(2.6,0)$, then define cost function as $C(\tilde\theta)=\frac{1}{2}\|\tilde\theta-\theta^\prime\|^2$.
\begin{figure}[h]
	\centering
	\begin{subfigure}[t]{.3\textwidth}
		\centering
		\includegraphics[width=0.9\textwidth, height=0.72\textwidth]{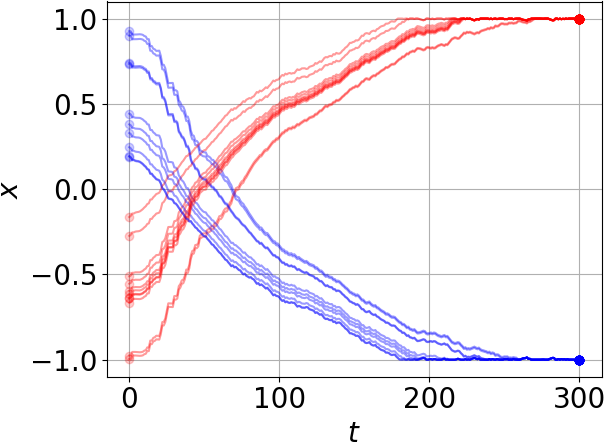}
                \caption{poisoning trajectory}
                \label{fig:traj1D}
	\end{subfigure}
	\begin{subfigure}[t]{.3\textwidth}
		\centering
		\includegraphics[width=0.8\textwidth]{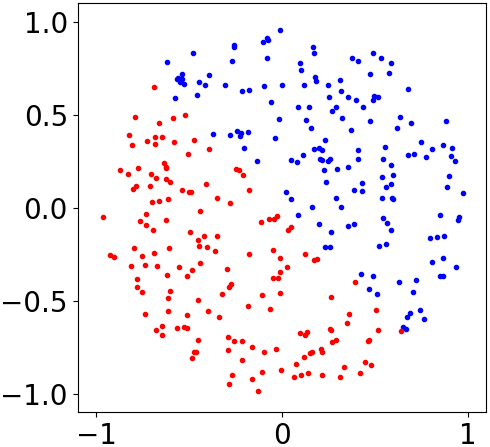}
		\caption{training set}
		\label{fig:2D-training}
	\end{subfigure}%
	\begin{subfigure}[t]{.3\textwidth}
		\centering
		\includegraphics[width=0.8\textwidth]{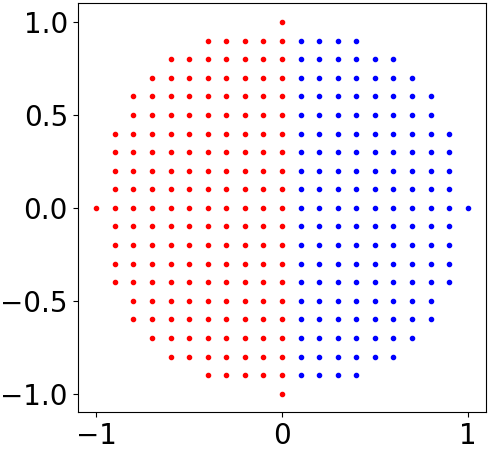}
	 	\caption{evaluation set}
		\label{fig:2D-evaluation}
	\end{subfigure}%
	\caption{(a) 1D example (b, c) 2D example}
	\label{fig:combo}
\end{figure}

We run deep-DPV with $k=n$ for $T=5\times 10^3$ iterations. 
In Figure~\ref{fig:traj2D}(\subref{fig:labelaversion_2Dtraj})-(\subref{fig:parametertargeting_2Dtraj}), we show the poisoning trajectories for the three attack goals respectively.  Each translucent point is an original training point, and the connected solid point is its final position after attack.  The curve connecting them shows the trajectory as the attack algorithm gradually poisons the data.

In label-aversion attack the attacker aims to maximize the logistic loss on the evaluation set. It ends up moving positive (negative) items to the left (right), so that the poisoned data deviates from the evaluation set. 

In contrast, the label-targeting attack tries to minimize the logistic loss, thus items are moved to produce a poisoned data aligned with the evaluation set. However, the attack does not reproduce exactly the evaluation set. To understand it, we compute the model learnt by vanilla logistic regression on the evaluation set and the poisoned data, which are (2.60,0) and (2.94,0.01). Note that both models can predict the right label on the evaluation set, but the latter has larger norm, which leads to smaller attack cost, thus our result is a better attack.

In parameter-targeting attack, we compute the model learnt by vanilla logistic regression on the clean data and the poisoned data, which are (1.86,1.85) and $(2.21, 0.04)$. Note that the attack pushes the model closer to the target model (2.6,0). Again, the attack does not reproduce exactly the evaluation set, which is because the goal is to minimize the attack cost $J=\E{b}{C(\tilde\theta)}$ rather than $C(\tilde\theta)$. To see that, we evaluate the attack cost $J$ (as explained below) on the evaluation set and the poisoned data, and the values are 2.12 and 1.08 respectively, thus our result is a better attack. 

To quantify the attack performance, we evaluate the attack cost $J(\tilde D)$ via Monte-Carlo sampling. 
We run private learners $T_e=10^3$ times, each time with a random $b_s$ to produce a cost $C_s=C(\M(\tilde D,b_s))$. 
Then we average to obtain an estimate of the cost $J(\tilde D)\approx\frac{1}{T_e}\sum_{s=1}^{T_e}C_s$. 
In Figure~\ref{fig:traj2D}(\subref{fig:labelaversion_2DJt})-(\subref{fig:parametertargeting_2DJt}),
we show the attack cost $J(\tilde D)$ and its 95\% confidence interval (green, $\pm 2*$stderr) up to $t=10^3$ iterations. $J(\tilde D)$ decreases in all plots, showing that our attack is effective for all attack goals.

 \begin{figure}
 \centering
	 \begin{subfigure}[t]{.3\textwidth}
		\centering
		\includegraphics[width=0.8\textwidth]{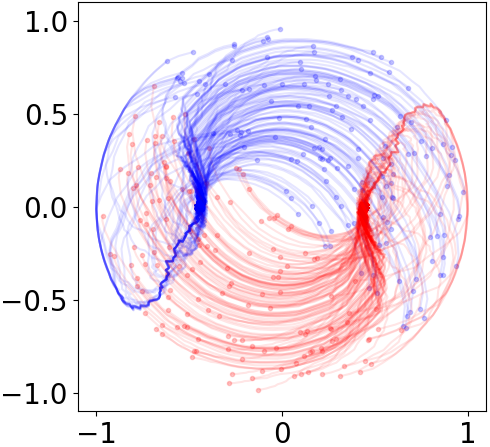} 
		\caption{label-aversion}
		\label{fig:labelaversion_2Dtraj}
	\end{subfigure}%
	\begin{subfigure}[t]{.3\textwidth}
		\centering
		\includegraphics[width=0.8\textwidth]{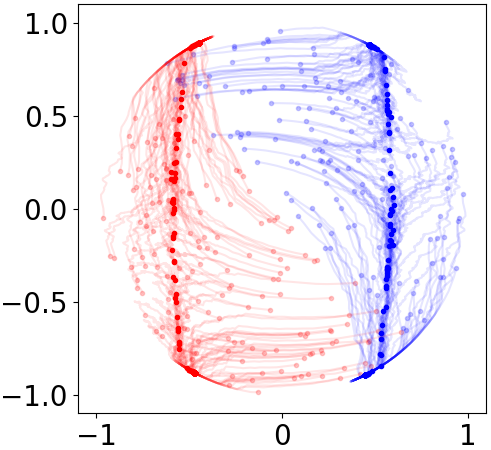} 
		\caption{label-targeting}
		\label{fig:labeltargeting_2Dtraj}
	\end{subfigure}%
	 \begin{subfigure}[t]{.3\textwidth}
		\centering
		\includegraphics[width=0.8\textwidth]{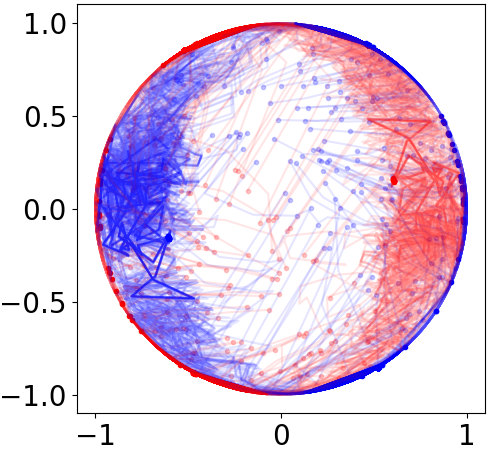} 
		\caption{parameter-targeting}
		\label{fig:parametertargeting_2Dtraj}
	\end{subfigure}%
	
	 \begin{subfigure}[t]{.3\textwidth}
		\centering
		\includegraphics[width=0.8\textwidth]{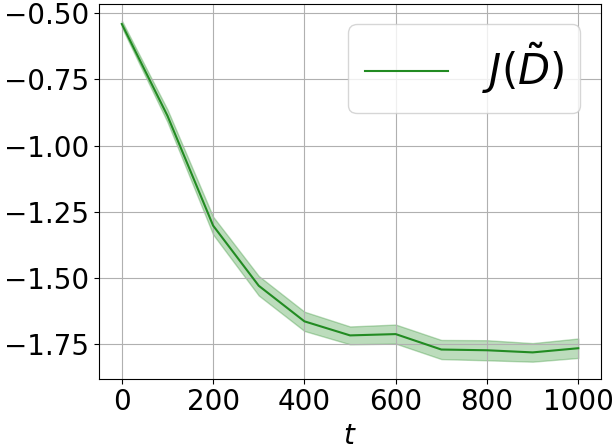} 
		\caption{label-aversion}
		\label{fig:labelaversion_2DJt}
	\end{subfigure}%
	\begin{subfigure}[t]{.3\textwidth}
		\centering
		\includegraphics[width=0.8\textwidth]{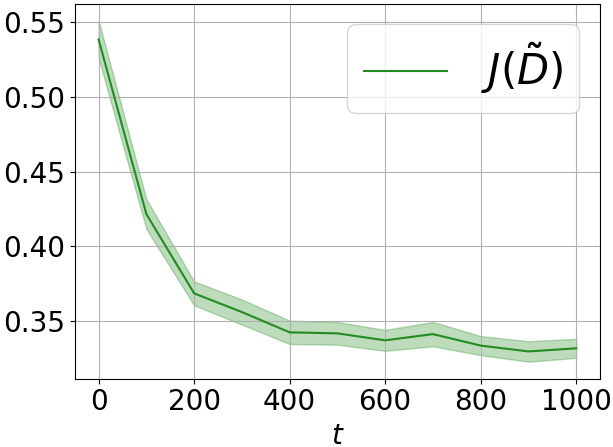} 
		\caption{label-targeting}
		\label{fig:labeltargeting_2DJt}
	\end{subfigure}%
	 \begin{subfigure}[t]{.3\textwidth}
		\centering
		\includegraphics[width=0.8\textwidth]{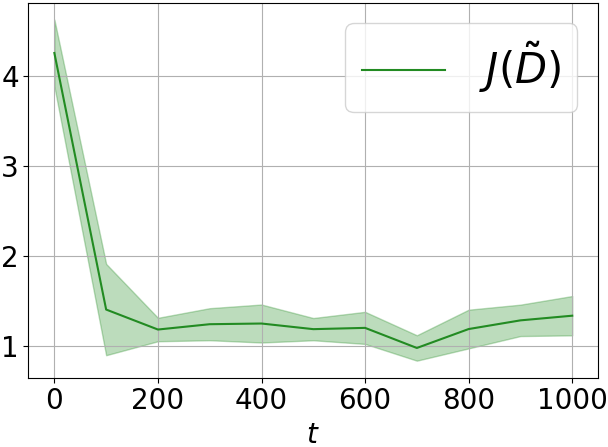} 
		\caption{parameter-targeting}
		\label{fig:parametertargeting_2DJt}
	\end{subfigure}%
	\caption{poisoning trajectories in 2D.}
	\label{fig:traj2D}
\end{figure}

\subsection{Attacks Become More Effective as $k$ Increases}
\label{sec:k}
The theoretical protection provided by differential privacy weakens as the adversary is allowed to poison more training items.
In this section, we show that our attacks indeed become more effective as the number of poisoned items $k$ increases.
The data set is the same as in section~\ref{exp:ADG}.
We run our four attack algorithms with $k$ from 20 to 100 in steps of 20. For each $k$, we first select $k$ items by using shallow or deep selection, and then poison the selected items by running
(stochastic) gradient descent for $T=5\times 10^3$ iterations. After attack, we estimate the attack cost $J(\tilde D)$ on $T_e=2\times 10^3$ samples.
Figure~\ref{fig:Jk2D}(\subref{fig:labelaversion_2DJk})-(\subref{fig:parametertargeting_2DJk}) show that the attack cost $J(\tilde D)$ decreases as $k$ grows, indicating better attack performance.
We also show the poisoning trajectory produced by deep-DPV with $k=10$ in Figure~\ref{fig:Jk2D}(\subref{fig:labelaversion_traj10})-(\subref{fig:parametertargeting_traj10}). The corresponding attack costs $J(\tilde D)$ are $-0.60,0.49$ and $3.43$ respectively. Compared to that of $k=n$, $-1.79$, $0.32$ and $1.20$, we see that poisoning only 10 items is not enough to reduce the attack cost $J(\tilde D)$ significantly, although the attacker is making effective modification.
 \begin{figure}
	\centering
	\begin{subfigure}[t]{.3\textwidth}
		\centering
		\includegraphics[width=0.8\textwidth]{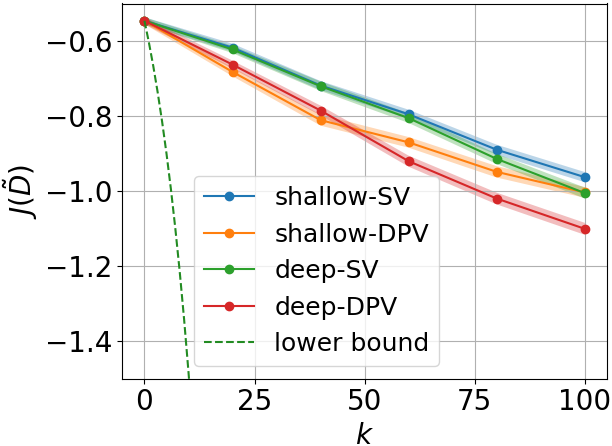}
		\caption{label-aversion}
		\label{fig:labelaversion_2DJk}
	\end{subfigure}
	\begin{subfigure}[t]{.3\textwidth}
		\centering
		\includegraphics[width=0.8\textwidth]{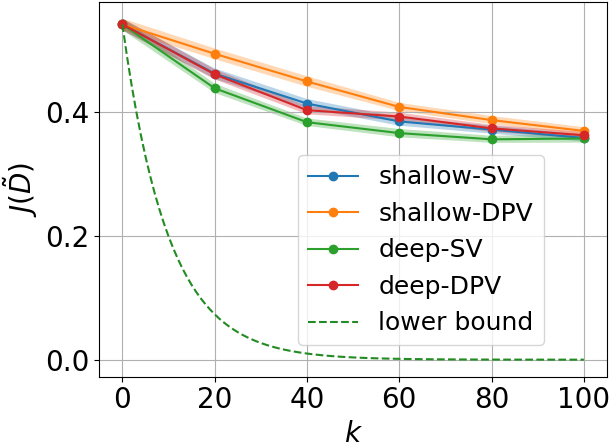} 
		\caption{label-targeting}
		\label{fig:labeltargeting_2DJk}
	\end{subfigure}%
	\begin{subfigure}[t]{.3\textwidth}
		\centering
		\includegraphics[width=0.8\textwidth]{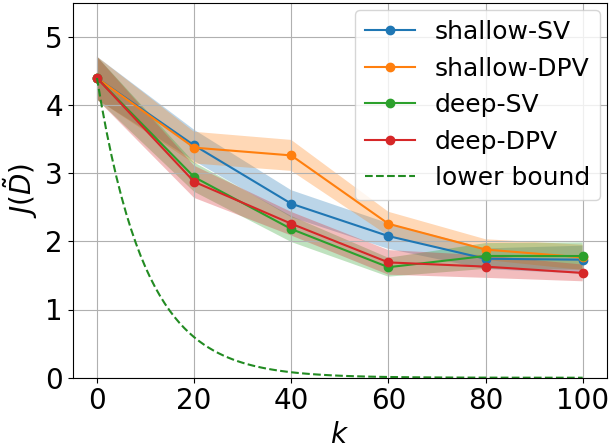} 
		\caption{parameter-targeting}
		\label{fig:parametertargeting_2DJk}
	\end{subfigure}%
	
	\centering
	\begin{subfigure}[t]{.3\textwidth}
		\centering
		\includegraphics[width=0.8\textwidth]{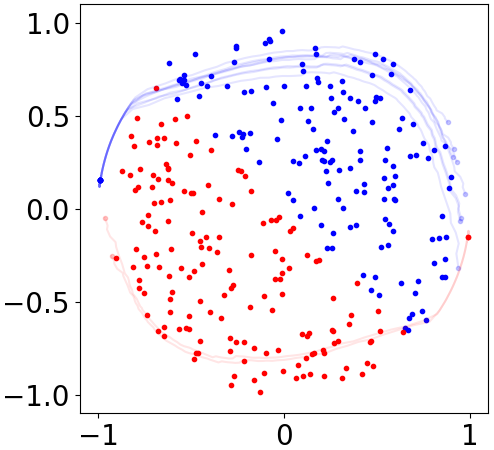}
		\caption{label-aversion}
		\label{fig:labelaversion_traj10}
	\end{subfigure}
	\begin{subfigure}[t]{.3\textwidth}
		\centering
		\includegraphics[width=0.8\textwidth]{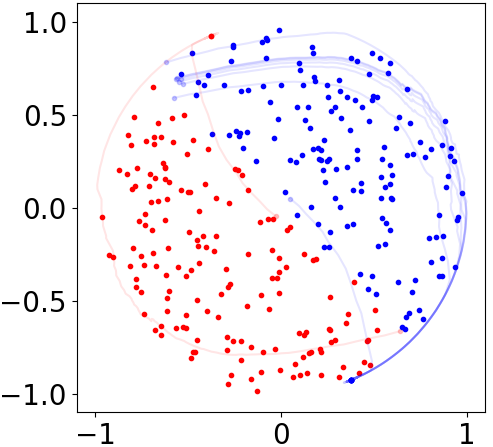} 
		\caption{label-targeting}
		\label{fig:labeltargeting_traj10}
	\end{subfigure}%
	\begin{subfigure}[t]{.3\textwidth}
		\centering
		\includegraphics[width=0.8\textwidth]{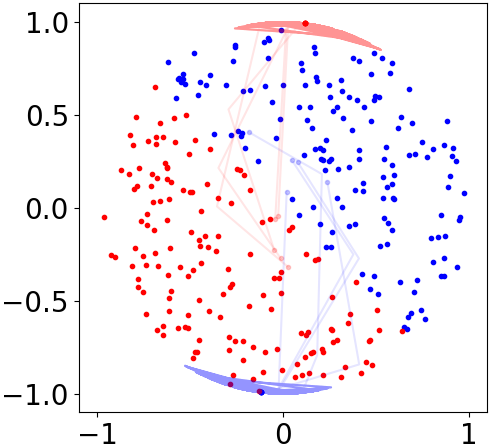} 
		\caption{parameter-targeting}
		\label{fig:parametertargeting_traj10}
	\end{subfigure}%
	\caption{(a-c) the attack cost $J(\tilde D)$ decreases as $k$ grows. (d-f) the sparse attack trajectories for $k=10$}
	\label{fig:Jk2D}
\end{figure}

\subsection{Attacks Effective on Both Privacy Mechanisms}
We now show experiments on real data to illustrate that our attacks are effective on both objective and output perturbation mechanisms. 
We focus on label-targeting attack in this section.

The first real data set is vertebral column from the UCI Machine
Learning Repository~\cite{Dua:2017}.  This data set contains 6 features of 310
orthopaedic patients, and the task is to predict if the vertebra of any patient
is abnormal (positive class) or not (negative class).  We normalize the features
so that the norm of any item is at most 1.
Since this is a classification task, we use private logistic regression with $\epsilon=0.1$ and $\lambda=10$ as the victim.
To generate the evaluation set, we randomly pick one positive item and find its
$10$ nearest neighbours within the positive class, and set  
$y_i^*=-1$ for all 10.  Intuitively, this attack targets a small cluster of abnormal patients and
the goal is to mislead the learner to classify these patients as normal. The
other experimental parameters remain the same as in section~\ref{sec:k}.
Note that in order to push the prediction on $x_i^*$ toward $y_i^*$, the attacker requires $y_i^*{x_i^*}^\top\tilde\theta>0$, which indicates $C(\tilde\theta)=\log(1+\exp(-y_i^*{x_i^*}^\top\tilde\theta))<\log 2\approx 0.69$, so should $J(\tilde D)$. As shown in Figure~\ref{fig:column}, the attacker indeed reduces the attack cost $J(\tilde D)$ below 0.69 for both privacy mechanisms, thus our attack is successful.
\begin{figure}[h]
	\centering
	\begin{subfigure}[t]{.35\textwidth}
		\centering
		\includegraphics[width=0.9\textwidth]{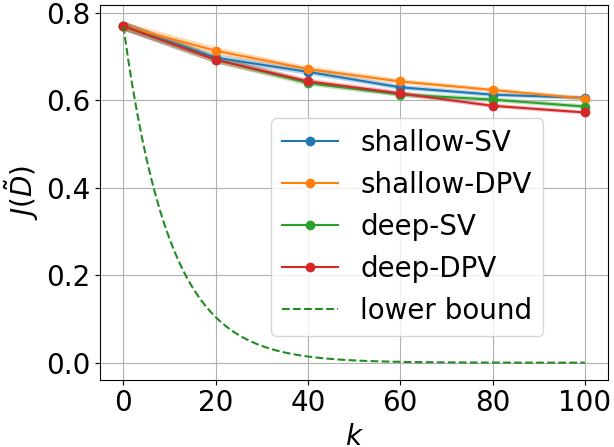}
		\caption{objective perturbation}
		\label{fig:column-obj}
	\end{subfigure}
	\begin{subfigure}[t]{.35\textwidth}
		\centering
		\includegraphics[width=0.9\textwidth]{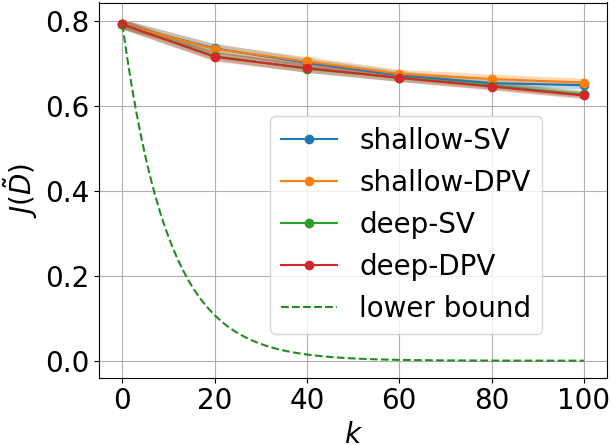} 
		\caption{output perturbation}
		\label{fig:column-out}
	\end{subfigure}%
	\caption{attack on objective and output-perturbed logistic regression.}
	\label{fig:column}
\end{figure}

The second data set is red wine quality from UCI. 
The data set contains 11 features of 1598 wine samples, and the task is predict the wine quality, a number between 0 and 10. 
We normalize the features so that the norm of any item is at most 1. 
Labels are also normalized to ensure the value is in $[-1,1]$.
The victim is private ridge regression with $\epsilon=1$, $\lambda=10$, and bound on model space $\rho=2$. 
To generate the evaluation set, we pick one item $x_i^*$ with the smallest quality value, and then set the target label to be $y_i^*=1$.
The cost function is defined as $C(\theta)=\frac{1}{2}({x_i^*}^\top \theta-y_i^*)^2$.
The other parameters of attack remain the same as in section~\ref{sec:k}.
This attack aims to force the learner to predict a low-quality wine as having a high quality. 
Figure~\ref{fig:wine} shows the results on objective perturbation and output perturbation respectively. 
Note that the attack cost can be effectively reduced even if the attacker only poisons $100/1598\approx6.3\%$ of the whole data set.

\begin{figure}[h]
	\centering
	\begin{subfigure}[t]{.35\textwidth}
		\centering
		\includegraphics[width=0.9\textwidth]{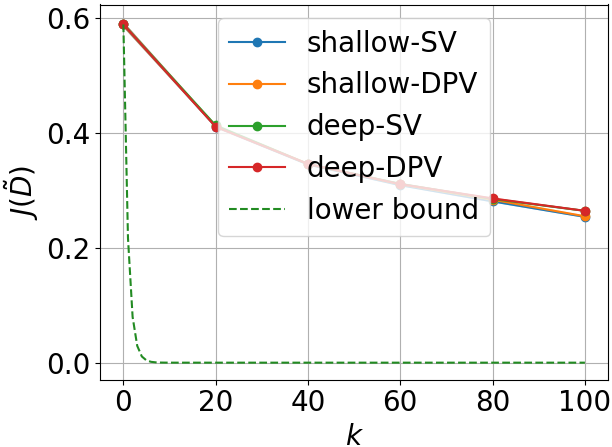}
		\caption{objective perturbation}
		\label{fig:wine-obj}
	\end{subfigure}
	\begin{subfigure}[t]{.35\textwidth}
		\centering
		\includegraphics[width=0.9\textwidth]{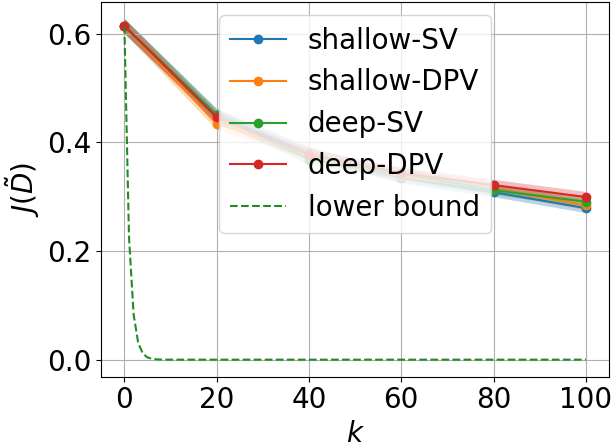} 
		\caption{output perturbation}
		\label{fig:wine-out}
	\end{subfigure}%
	\caption{attack on objective and output-perturbed ridge regression.}
	\label{fig:wine}
\end{figure}

Given experimental results above, we make another two observations here. First, deep-DPV is in  general the most effective method among four attack algorithms, see e.g. Figure~\ref{fig:Jk2D}(\subref{fig:labelaversion_2DJk}), (\subref{fig:parametertargeting_2DJk}). Second, although deep-DPV is effective, there remains a gap between the attack performance and the theoretical lower bound, as is shown in all previous plots. One potential reason is that the lower bound is derived purely based on the differential privacy property, thus does not take into account the learning procedure of the victim. Analysis oriented toward specific victim learners may result in a tighter lower bound. Another potential reason is that our attack might not be effective enough. How to close the gap is left as future work.

\subsection{Attack Is Easier with Weaker Privacy}
Finally, we show that the attack is more effective as the privacy guarantee
becomes weaker, i.e., as we increase $\epsilon$. 
To illustrate, we run experiments on the data set in section~\ref{exp:ADG}, where we fix $k=100$ and vary $\epsilon$ from $0.1$ to $0.5$.
In Figure~\ref{fig:eps}, we show the attack cost $J(\tilde D)$ against $\epsilon$. 
Note that the attack cost $J(\tilde D)$ approaches the lower bound as $\epsilon$ grows. 
This means weaker privacy guarantee (smaller $\epsilon$) leads to easier attacks. 

\begin{figure}[H]
        \centering
                \includegraphics[width=0.35\textwidth]{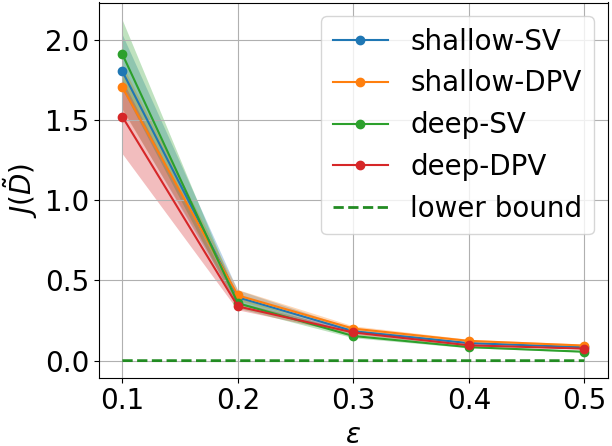}
                \caption{Attacker can reduce cost $J(\tilde D)$ more as $\epsilon$ grows.}
                \label{fig:eps}
\end{figure}

	\section{Conclusion and Future Work}
We showed that differentially private learners are provably resistant to data
poisoning attacks, with the protection degrading exponentially as the attacker
poisons more items. Then, we proposed attacks that can effectively poison
differentially-private learners, and demonstrated the attack performance on a variety of
privacy mechanisms and learners with both synthetic and real data.  While the
attacks are effective, there remains a gap between the theoretical lower bound
and the empirical performance of our attacks. This could be because the lower
bound is loose, or our attack is not effective enough; closing the gap remains
future work. Taken together, our study is a first step towards understanding the
strengths and weaknesses of differential privacy as a defensive measure against
data poisoning.

	\subsubsection*{\large Acknowledgements}
We thank Steve Wright for helpful discussions.
This work is supported in part by NSF
1545481, 1561512, 1623605, 1704117, 1836978, the MADLab AF Center of Excellence
FA9550-18-1-0166, a Facebook TAV grant, and the University of Wisconsin.
	\bibliographystyle{plainnat}
	\bibliography{ref}
	\section*{Appendix: Regularity Condition for Differentiation-Integration Exchange}
The following theorem characterizes a sufficient condition when the order of differentiation and integration can be exchanged (restated from Theorem 2.4.3 in~\cite{casella2002statistical}), which is called the regularity condition.
\begin{restatable}{theorem}{regular}
\label{thm:regular}
Suppose $h(z,b)$ is a differentiable function of $z$. If for each $z$, and there exist a function $g(z,b)$ and a constant $d>0$ that satisfy
\begin{enumerate}[i).]
\item $\|\frac{\partial}{\partial z}h(z, b)\mid_{z=z^\prime}\|\le g(z,b)$, for all $b$ and all $z^\prime$ such that $\|z^\prime-z\|\le d$.
\item $\int g(z,b)db<\infty$.
\end{enumerate}
Then for any $z$, we have
\begin{equation}
\frac{d}{dz}\int h(z,b)db=\int \frac{\partial}{\partial z}h(z,b)db.
\end{equation}
\end{restatable}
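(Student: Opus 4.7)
The plan is to recognize this statement as the standard Leibniz rule under a uniform integrable bound, so the proof reduces to a textbook application of the dominated convergence theorem (DCT). Three pieces are needed: rewrite the derivative as a limit of difference quotients, dominate those quotients pointwise via the mean value theorem, and swap limit with integral using DCT.

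First I would fix $z$ and take an arbitrary sequence of nonzero scalars $h_n \to 0$ with $|h_n|\le d$, so that the left-hand side becomes $\lim_{n\to\infty}\int \frac{h(z+h_n,b)-h(z,b)}{h_n}\,db$ once the exchange is justified. The crucial step is to dominate the integrand uniformly in $n$: by the one-dimensional mean value theorem applied to $h(\cdot,b)$ along the segment from $z$ to $z+h_n$, there exists $\xi_n(b)$ with $\|\xi_n(b)-z\|\le |h_n|\le d$ satisfying $\frac{h(z+h_n,b)-h(z,b)}{h_n}=\frac{\partial}{\partial z}h(z,b)\big|_{z=\xi_n(b)}$. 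Condition (i) then bounds this quantity in norm by $g(z,b)$, and condition (ii) says $g(z,\cdot)$ is integrable. As a byproduct, $\int h(z+h_n,b)\,db$ is finite whenever $\int h(z,b)\,db$ is, so nothing is lost in pulling the limit inside the integral.

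Next, for each fixed $b$, differentiability of $h(\cdot,b)$ at $z$ gives pointwise convergence of the difference quotient to $\frac{\partial}{\partial z}h(z,b)$ as $n\to\infty$. Combined with the integrable dominating function $g(z,\cdot)$, DCT swaps limit and integral to deliver the claimed identity; since the sequence $h_n$ was arbitrary, the derivative of $\int h(z,b)\,db$ exists and equals $\int \frac{\partial}{\partial z}h(z,b)\,db$. The only mildly delicate point, and what I would flag as the main obstacle, is the vector-valued case: if $z$ lives in $\R^m$ then $\frac{d}{dz}$ must be interpreted coordinatewise or as a directional derivative, and the MVT is invoked along a one-dimensional segment inside $\{z':\|z'-z\|\le d\}$. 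Condition (i) is stated on the entire ball, so the same dominating function $g(z,b)$ works verbatim and no new argument is needed; everything else is routine measure theory.
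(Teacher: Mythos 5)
Your proof is correct and is exactly the standard argument: the paper does not prove this theorem itself but cites it as Theorem 2.4.3 of Casella and Berger, whose proof is precisely your combination of difference quotients, the mean value theorem along a segment inside the ball $\{z':\|z'-z\|\le d\}$, and the dominated convergence theorem with dominating function $g(z,\cdot)$. Your remark about interpreting the derivative coordinatewise for vector-valued $z$ is the right way to handle the only delicate point, and since $h(\tilde z_i,b)=C(\M(\tilde D,b))\nu(b)$ is scalar-valued in the paper's application, the scalar MVT along each coordinate direction suffices verbatim.
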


Note that in condition $i)$, the constant $d$ could depend on $z$ implicitly. Proposition~\ref{prop:sgd} is then a straight-forward corollary of the above theorem by taking function $h$ as $h(\tilde z_i,b)=C(\M(\tilde D, b))\nu(b)$. The $\nu(b)$ is usually chosen as the Laplace distribution $\nu(b)=\psi e^{-\|b\|}$ for some constant $\psi$, thus from now on we consider  $h(\tilde z_i,b)=C(\M(\tilde D, b))\psi e^{-\|b\|}$. We also require that the poisoned features and labels satisfy $\|\tilde x_i\|\le 1$ and $|\tilde y_i|\le 1$, to conform with the standard assumption that data must be bounded in differentially private machine learning. Next we use parameter-targeting attack as an example to illustrate that the regularity condition always holds in our attack problem.
\begin{enumerate}
\item Attacking objective-perturbed logistic regression.
\label{regularity:exp1}

In our case, the attacker only modifies the feature, thus we denote $h(\tilde x_i,b)=C(\M(\tilde D, b))\psi e^{-\|b\|}$. By~\eqref{obj-logit-sgdx},
\begin{equation}\label{app:derivative}
\frac{\partial C(\tilde \theta)}{\partial \tilde x_i}=\left(\frac{y_i}{1+s_i}I-\frac{s_i\tilde\theta\tilde x_i^\top}{(1+s_i)^2}\right)\left(\lambda I+\sum_{j=1}^n\frac{s_j\tilde x_j\tilde x_j^\top}{(1+s_j)^2}\right)^{-1}(\tilde\theta-\theta^\prime),
\end{equation}
where $s_j=\exp(y_j\tilde\theta^\top\tilde x_j)$ and  $\tilde\theta=\M(\tilde D, b)$. We let $d=1$ in condition $i)$. Note that $\|\tilde x_i\|\le 1$, thus for any $\tilde x_i^\prime$ such that $\|\tilde x_i^\prime-\tilde x_i\|\le d$, by triangle inequality we have $\|\tilde x_i^\prime\|\le \|\tilde x_i\|+d=2$. 
Let $s_i^\prime=\exp(y_i\tilde\theta^\top\tilde x_i^\prime)$. 
By~\eqref{app:derivative}, we have
\begin{equation}\label{app:intermediate}
\begin{aligned}
\|\frac{\partial C(\tilde \theta)}{\partial \tilde x_i}\mid_{\tilde x_i=\tilde x_i^\prime}\|&\le \|\frac{y_i}{1+s_i^\prime}I-\frac{s_i^\prime\tilde\theta \tilde x_i^{\prime^\top}}{(1+s_i^\prime)^2}\|\cdot \|\left(\lambda I+\sum_{\substack{1\le j\le n\\j\neq i}}\frac{s_j\tilde x_j\tilde x_j^\top}{(1+s_j)^2}+\frac{s_i^\prime\tilde x_i^\prime\tilde x_i^{\prime^\top}}{(1+s_i^\prime)^2}\right)^{-1}\|\cdot\|\tilde\theta-\theta^\prime\|\\
&\le \left(\|\frac{y_i}{1+s_i^\prime}I\|+\|\frac{s_i^\prime\tilde\theta\tilde x_i^{\prime^\top}}{(1+s_i^\prime)^2}\|\right)\frac{1}{\lambda}(\|\tilde\theta\|+\|\theta^\prime\|)\\
&= \frac{1}{\lambda}\left(\frac{1}{1+s_i^\prime}+\frac{s_i^\prime}{(1+s_i^\prime)^2}\|\tilde\theta\|\|\tilde x_i^\prime\|\right)(\|\tilde\theta\|+\|\theta^\prime\|)\\
&\le \frac{1}{\lambda}(1+\frac{1}{2}\|\tilde\theta\|)(\|\tilde\theta\|+\|\theta^\prime\|)\quad (\text{since } 1+s_i^\prime\ge 2\sqrt{s_i^\prime} \text{ and } \|\tilde x_i^\prime\|\le 2).
\end{aligned}
\end{equation}
Now we upper bound $\|\tilde \theta\|$. By~\eqref{obj-logit-KKT},
\begin{equation}\label{obj-logit-modelbound}
\begin{aligned}
\|\tilde\theta\|&=\frac{1}{\lambda}\|\sum_{\substack{1\le j\le n\\j\neq i}}\frac{y_j\tilde x_j}{1+s_j}+\frac{y_i\tilde x_i^\prime}{1+s_i^\prime}-b\|
\le\frac{1}{\lambda}(\sum_{\substack{1\le j\le n\\j\neq i}}\|\frac{y_j\tilde x_j}{1+s_j}\|+\|\frac{y_i\tilde x_i^\prime}{1+s_i^\prime}\|+\|b\|)\\
&\le\frac{1}{\lambda}(n-1+2+\|b\|)=\frac{1}{\lambda}(n+1+\|b\|).
\end{aligned}
\end{equation}
Plugging~\eqref{obj-logit-modelbound} back to~\eqref{app:intermediate}, for all $\tilde x_i^\prime$ such that $\|\tilde x_i^\prime-\tilde x_i\|\le 1$, we have
\begin{equation}
\|\frac{\partial C(\tilde \theta)}{\partial \tilde x_i}\mid_{\tilde x_i=\tilde x_i^\prime}\|\le  \frac{1}{\lambda}(1+\frac{n+1+\|b\|}{2\lambda})(\frac{n+1+\|b\|}{\lambda}+\|\theta^\prime\|)\le C_1\|b\|^2,
\end{equation}
for some constant $C_1>0$. Therefore
\begin{equation}
\|\frac{\partial h(\tilde x_i,b)}{\partial \tilde x_i}\mid_{\tilde x_i=\tilde x_i^\prime}\|=\|\frac{\partial C(\tilde \theta)}{\partial \tilde x_i}\mid_{\tilde x_i=\tilde x_i^\prime}\|\psi e^{-\|b\|}\le C_1\psi \|b\|^2e^{-\|b\|}.
\end{equation}
One can define $g(\tilde x_i,b)=C_1\psi \|b\|^2e^{-\|b\|}$, which is in fact a function of only $b$, to satisfy condition $i)$ in~\thmref{thm:regular}. Moreover, one can show that $\int g(\tilde x_i,b)db<\infty$, thus condition $ii)$ is also satisfied. Therefore we can exchange the order of differentiation and integration.

\item Attacking objective-perturbed ridge regression.

We separately show that $h(\tilde x_i,b)$ and $h(\tilde y_i,b)$ satisfy the conditions in~\thmref{thm:regular}.

First we consider $h(\tilde x_i, b)$. By~\eqref{obj-ridge-sgdx},
\begin{equation}
\begin{aligned}
\frac{\partial C(\tilde\theta)}{\partial \tilde x_i}=\left(\tilde\theta \tilde x_i^\top+(\tilde x_i^\top \tilde\theta-\tilde y_i) I\right)\left(\tilde X^\top \tilde X+(\lambda+\mu) I\right)^{-1}(\theta^\prime-\tilde\theta).
\end{aligned}
\end{equation}
We let $d=1$ in condition $i)$. Note that $\|\tilde x_i\|\le 1$, thus $\|\tilde x_i^\prime\|\le \|\tilde x_i\|+1\le2$. Since $|\tilde y_i|\le 1$, $\|\tilde\theta\|\le \rho$, and $\mu\ge 0$, one can easily show that
\begin{equation}
\|\frac{\partial C(\tilde \theta)}{\partial \tilde x_i}\mid_{\tilde x_i=\tilde x_i^\prime}\|\le\frac{1}{\lambda}(4\rho+1)(\rho+\|\theta^\prime\|).
\end{equation}
Define $g(\tilde x_i,b)=\frac{1}{\lambda}(4\rho+1)(\rho+\|\theta^\prime\|)\psi e^{-\|b\|}$. Note that $\int g(\tilde x_i,b)db=\frac{1}{\lambda}(4\rho+1)(\rho+\|\theta^\prime\|)<\infty$, thus both condition $i)$ and $ii)$ are satisfied and we can exchange the order of differentiation and integration.

Next we consider $h(\tilde y_i, b)$. According to~\eqref{obj-ridge-sgdy},
\begin{equation}
\frac{\partial C(\tilde\theta)}{\partial \tilde y_i}=\tilde x_i^\top\left(\tilde X^\top \tilde X+(\lambda+\mu) I\right)^{-1}(\tilde\theta -\theta^\prime).
\end{equation}

Since $\|\tilde x_i\|\le 1$, $\|\tilde\theta\|\le \rho$ and $\mu\ge 0$, one can easily show that for all $\tilde y_i^\prime$,
\begin{equation}
\|\frac{\partial C(\tilde \theta)}{\partial \tilde y_i}\mid_{\tilde y_i=\tilde y_i^\prime}\|\le\frac{1}{\lambda}(\rho+\|\theta^\prime\|).
\end{equation}
Define $g(\tilde y_i,b)=\frac{1}{\lambda}(\rho+\|\theta^\prime\|)\psi e^{-\|b\|}$. Note that $\int g(\tilde y_i,b)db=\frac{1}{\lambda}(\rho+\|\theta^\prime\|)<\infty$, thus both condition $i)$ and $ii)$ are satisfied and we can exchange the order of differentiation and integration.

\item Attacking output-perturbed logistic/ridge regression.

The proof for output-perturbed logistic regression and ridge regression are similar to the objective-perturbed learners, thus we omit them here.

\end{enumerate}

For other attack goals such as label-aversion attack and label-targeting attack, the regularity condition also holds and the proof is similar.

	%\newpage
	
	%\appendix
	
\end{document}